\def\eqref#1{equation~\ref{#1}}
\def\1{\bm{1}}
\def\eps{{\epsilon}}
\def\vxi{{\bm{\xi}}}
\def\vepsilon{{\bm{\varepsilon}}}
\def\vnu{{\bm{\nu}}}
\def\vb{{\bm{b}}}
\def\vd{{\bm{d}}}
\def\vf{{\bm{f}}}
\def\vg{{\bm{g}}}
\def\vh{{\bm{h}}}
\def\vr{{\bm{r}}}
\def\vu{{\bm{u}}}
\def\vv{{\bm{v}}}
\def\vx{{\bm{x}}}
\def\vy{{\bm{y}}}
\def\vz{{\bm{z}}}
\def\mA{{\bm{A}}}
\def\mB{{\bm{B}}}
\def\mC{{\bm{C}}}
\def\mI{{\bm{I}}}
\def\mJ{{\bm{J}}}
\def\mL{{\bm{L}}}
\def\mO{{\bm{O}}}
\def\mQ{{\bm{Q}}}
\def\mW{{\bm{W}}}
\def\mLambda{{\bm{\Lambda}}}
\def\mSigma{{\bm{\Sigma}}}
\DeclareMathAlphabet{\mathsfit}{\encodingdefault}{\sfdefault}{m}{sl}
\SetMathAlphabet{\mathsfit}{bold}{\encodingdefault}{\sfdefault}{bx}{n}
\newcommand{\tens}[1]{\bm{\mathsfit{#1}}}
\def\tW{{\tens{W}}}
\newcommand{\R}{\mathbb{R}}
\DeclareMathOperator*{\argmax}{arg\,max}
\newcommand{\tvx}{{\tilde \vx}}
\newcolumntype{C}{>{\centering\arraybackslash}X}
\newcolumntype{R}{>{\raggedleft\arraybackslash}X}
\newcolumntype{L}{>{\raggedright\arraybackslash}X}
\definecolor{color1}{HTML}{1b9e77}
\definecolor{color2}{HTML}{d95f02}
\definecolor{color3}{HTML}{7570b3}
\definecolor{color4}{HTML}{e7298a}
\definecolor{color5}{HTML}{66a61e}
\definecolor{color6}{HTML}{e6ab02}
\definecolor{color7}{HTML}{a6761d}
\titlespacing\section{0pt}{8pt plus 4pt minus 2pt}{0pt plus 2pt minus 2pt}
\titlespacing\subsection{0pt}{6pt plus 2pt minus 2pt}{0pt plus 2pt minus 2pt}
\titlespacing\subsubsection{0pt}{6pt plus 2pt minus 2pt}{0pt plus 2pt minus 2pt}
\renewcommand{\paragraph}[1]{\textbf{#1}}
\newcommand{\noise}{\vepsilon}
\newcommand{\cmark}{\text{\ding{51}}}
\newcommand{\xmark}{\text{\ding{55}}}
\newtheorem{theorem}{Theorem}
\newtheorem{restatedtheorem}{Theorem}
\newtheorem{corollary}{Corollary}
\newcommand{\hvf}{{\hat{\vf}}}
\DeclareMathOperator{\supp}{supp}
\title{Self-supervised contrastive learning\\performs non-linear system identification}
\newcommand*\samethanks[1][\value{footnote}]{\footnotemark[#1]}
\author{%
    Rodrigo González Laiz\thanks{Equal contribution.}, \ Tobias Schmidt\samethanks{} \  \&  
    Steffen Schneider\thanks{Correspondence: steffen.schneider@helmholtz-munich.de} \\[.5em]
    Institute of Computational Biology,
    Computational Health Center,
    Helmholtz Munich and \\
    Munich Center for Machine Learning (MCML)
}
\begin{document}

\doparttoc
\faketableofcontents
\maketitle

\begin{abstract}
    Self-supervised learning (SSL) approaches have brought tremendous success across many tasks and domains. It has been argued that these successes can be attributed to a link between SSL and identifiable representation learning: Temporal structure and auxiliary variables ensure that latent representations are related to the true underlying generative factors of the data. Here, we deepen this connection and show that SSL can perform system identification in latent space. We propose dynamics contrastive learning, a framework to uncover linear, switching linear and non-linear dynamics under a non-linear observation model, give theoretical guarantees and validate them empirically.
    \hfill
    Code:~\href{https://github.com/dynamical-inference/dyncl}{\texttt{github.com/dynamical-inference/dcl}}
\end{abstract}

\section{Introduction}

    The identification and modeling of dynamics from observational data is a long-standing problem in machine learning, engineering and science. 
    A discrete-time dynamical system with latent variables $\vx$, observable variables $\vy$, control signal $\vu$, its control matrix $\mB$, and noise $\vepsilon, \vnu$ can take the form
    \begin{equation}\label{eq:eq1}
        \begin{aligned}
            \vx_{t+1} &= \vf(\vx_{t}) + \mB\vu_t + \vepsilon_t \\
            \vy_{t} &= \vg(\vx_{t}) + \vnu_t,
        \end{aligned}
    \end{equation} 
    and we aim to infer the functions $\vf$ and $\vg$ from a time-series of observations and, when available, control signals.
    Numerous algorithms have been developed to tackle special cases of this problem formulation, ranging from classical system identification methods \citep{mcgee1985discovery, chen1989representations} to recent generative models~\citep{duncker2019learning, linderman2017bayesian, halva2021disentangling}. Yet, it remains an open challenge to improve the generality, interpretability and efficiency of these inference techniques, especially when $\vf$ and $\vg$ are non-linear functions.
    
    Contrastive learning (CL) and next-token prediction tasks have become important backbones of modern machine learning systems for learning from sequential data, proving highly effective for building meaningful latent representations \citep{baevski2022data2vec,bommasani2021opportunities,brown2020language,oord2018representation,lecun2022path,sermanet2018time,radford2019language}. An emerging view is a connection between these algorithms and learning of \emph{world models} \citep{sh2018worldmodels,assran2023self, garrido2024learning}. %
    However, the theoretical understanding of non-linear system identification by these sequence-learning algorithms remains limited.
    
    In this work, we revisit and extend contrastive learning in the context of system identification. We uncover several surprising facts about its out-of-the-box effectiveness in identifying dynamics and unveil common design choices in SSL systems used in practice.
    Our theoretical study extends identifiability results \citep{hyvarinen2016unsupervised,hyvarinen2017nonlinear,hyvarinen2019nonlinear,Zimmermann2021ContrastiveLI,roeder2021linear} for CL towards dynamical systems. 
    While our theory makes several predictions about capabilities of standard CL, it also highlights shortcomings. 
    To overcome these and enable interpretable dynamics inference across a range of data generating processes, we propose a general framework for linear and non-linear system identification with CL (Figure~\ref{fig:model}).

    \paragraph{Background.} 
        An influential motivation of our work is Contrastive Predictive Coding (CPC; \citealp{oord2018representation}). CPC can be recovered as a special case of our framework when using an RNN dynamics model. Related works have emerged across different modalities: wav2vec~\citep{schneider2019wav2vec}, TCN~\citep{sermanet2018time} and CPCv2~\citep{henaff2020data}.
        In the field of system identification, notable approaches include the Extended Kalman Filter (EKF) \citep{mcgee1985discovery} and NARMAX \citep{chen1989representations}. Additionally, several works have also explored generative models for general dynamics \citep{duncker2019learning} and switching dynamics, e.g. rSLDS \citep{linderman2017bayesian}. In the Nonlinear ICA literature, identifiable algorithms for time-series data, such as Time Contrastive Learning (TCL; \citealp{hyvarinen2016unsupervised}) for non-stationary processes and Permutation Contrastive Learning (PCL; \citealp{hyvarinen2017nonlinear}) for stationary data have been proposed, with recent advances like SNICA \citep{halva2021disentangling} for more generally structured data-generating processes. 
        \begin{wrapfigure}{r}{0.4\textwidth}
                \small
                \begin{tikzpicture}%

\tikzstyle{encoder} = [rectangle, rounded corners, fill=red!20, draw=none, minimum width=0.75cm, minimum height=.8cm]
\tikzstyle{dynamics} = [rectangle, rounded corners, minimum width=0.5cm, minimum height=.8cm, draw=none, fill=red!10, right color=orange!20, left color=red!30]
\tikzstyle{loss} = [rectangle, rounded corners, fill=orange!20, draw=none, minimum width=0.75cm, minimum height=.8cm]
\tikzstyle{latent} = [circle, fill=white, draw=black, minimum size=.9cm, inner sep=0pt, font=\fontsize{10}{10}\selectfont]

\newcommand{\hmmheight}{0.5}

\node[obs] (yt_enc) at (0,0) {$\vy_t$};
\node[encoder, above= \hmmheight of yt_enc] (ht_enc) {$\vh$ \scriptsize (encoder)};
\node[dynamics, above=\hmmheight of ht_enc] (f_enc) {$\hat{\vf}$ \scriptsize (dynamics)};
\node[latent, left=\hmmheight of f_enc] (z) {$\vz_t$};

\node[obs, right=.5 of yt_enc] (yt_pos) {$\vy_{t+1}$};
\node[encoder, above= \hmmheight of yt_pos] (ht_pos) {$\vh$};

\node[obs, right=.5 of yt_pos] (yt_neg) {};
\node[obs, below left=-1.15 of yt_neg] {};
\node[obs, below left=-1.1 of yt_neg] {};
\node[obs, below left=-1.05 of yt_neg] {};
\node[obs, below left=-1 of yt_neg] {};
\node[obs, right=.5 of yt_pos] {$\vy_i^{-}$};

\node[encoder, above= \hmmheight of yt_neg] (ht_neg) {$\vh$};
\node[loss, above=\hmmheight of ht_pos, xshift=.75cm] (L) {$\mathcal{L}$};

\edge {yt_enc} {ht_enc};
\edge {ht_enc} {f_enc};
\edge {f_enc} {L};
\edge {ht_pos} {L};
\edge {ht_neg} {L};

\edge {yt_pos} {ht_pos};
\edge {yt_neg} {ht_neg};

\edge {z} {f_enc};

\end{tikzpicture}
                \caption{
                \textsc{DCL} framework: The encoder $\vh$ is shared across the reference $\vy_t$, positive $\vy_{t+1}$, and negative samples $\vy_i^-$. A dynamics model $\hat \vf$ forward predicts the reference. A (possibly latent) variable $\vz$ can parameterize the dynamics (cf.~\textsection\,\ref{sec:slds}) or external control (cf.~\textsection\,\ref{app:control}). The model fits the InfoNCE loss ($\mathcal{L}$).}
                \label{fig:model}
                \vspace{-3.5em}
            \end{wrapfigure} 
        In contrast to previous work, we focus on bridging time-series representation learning through contrastive learning with the identification of dynamical systems, both theoretically and empirically. Moreover, by not relying on an explicit data-generating model, our framework offers greater flexibility. We extend and discuss the connections to related work in more detail in Appendix~\ref{appendix:related}.
    
    \paragraph{Contributions.}
        We extend the existing theory on contrastive learning for time series learning and make adaptations to common inference frameworks. We introduce our CL variant (Fig.~1) in section~\ref{sec:contrastive-timeseries}, and give an identifiability result for both the latent space and the dynamics model in section~\ref{sec:theory}. These theoretical results are later empirically validated. We then propose a practical way to parameterize switching linear dynamics in section~\ref{sec:slds} and demonstrate that this formulation corroborates our theory for both switching linear system dynamics and non-linear dynamics in sections~\ref{sec:experiments}-\ref{sec:results}.

\section{Contrastive learning for time-series}
\label{sec:contrastive-timeseries}

    In contrastive learning, we aim to model similarities between pairs of data points (Figure~\ref{fig:model}). Our full model $\psi$ is specified by the log-likelihood
    \begin{equation} \label{eq:loss}
        \log p_\psi(\vy|\vy^+, N)
            = \psi(\vy, \vy^+) 
            - \log \!\!\sum_{\vy^- \in N \cup \{\vy^+\}} \!\! \exp(\psi(\vy, \vy^-)),
    \end{equation}
    where $\vy$ is often called the reference or anchor sample, $\vy^+$ is a positive sample, $\vy^- \in N$ are negative examples, and $N$ is the set of negative samples.
    The model $\psi$ itself is parameterized as a composition of an encoder, a dynamics model, and a similarity function and will be defined further below. We fit the model by minimizing the negative log-likelihood on the time series,
    \begin{equation} \label{eq:minimizer}
        \min_{\psi} \mathcal{L}[\psi] = \min_{\psi} \  \mathbb{E}_{t,t_1,\dots,t_M \sim U(1, T)}[
            - \log p_\psi(\vy_{t+1}|\vy_{t}, \{\vy_{t_m}\}_{m=1}^{M})
        ], 
    \end{equation}
    where positive examples are just adjacent points in the time-series, and $M$ negative examples are sampled uniformly across the dataset. $U(1, T)$ denotes a uniform distribution across the discrete time steps.

    To attain favorable properties for identifying the latent dynamics, we carefully design the hypothesis class for $\psi$. The motivation for this particular design will become clear later. To define the full model, a composition of several functions is necessary. Recall from Eq.~\ref{eq:eq1} that the dynamics model is given as $\vf$ and the mixing function is $\vg$. Correspondingly, our model is composed of the encoder $\vh: \R^D \mapsto \R^d$ (de-mixing), the dynamics model $\hat \vf: \R^d \mapsto \R^d$, the similarity function $\phi: \R^d \times \R^d \mapsto \R$ and a correction term $\alpha: \R^d \mapsto \R$. We define their composition as\footnote{
    Note that we can equivalently write $\phi( \tilde \vh(\vx)), \tilde \vh'(\vx'))$ using two asymmetric encoder functions, see additional results in Appendix \ref{appendix:vmf}.}
    \begin{equation}
        \psi(\vy, \vy') := \phi( \hat \vf(\vh(\vy)), \vh(\vy')) - \alpha(\vy'),
    \end{equation}
    and call the resulting algorithm \emph{dynamics contrastive learning} (\textsc{DCL}). 
    Intuitively, we obtain two observed samples $(\vy, \vy')$ which are first mapped to the latent space, $(\vh(\vy), \vh(\vy'))$. Then, the dynamics model is applied to $\vh(\vy)$, and the resulting points are compared through the similarity function $\phi$.
    The similarity function $\phi$ will be informed by the form of (possibly induced) system noise $\noise_t$. In the simplest form, the noise can be chosen as isotropic Gaussian noise, which results in a negative squared Euclidean norm for $\phi$.
    
    Note, the additional term $\alpha(\vy')$ is a correction applied to account for non-uniform marginal distributions. It can be parameterized as a kernel density estimate (KDE) with $\log \hat q(\vh(\vy')) \approx \log q(\vx')$ around the datapoints. In very special cases, the KDE makes a difference in empirical performance (App.~\ref{appendix:kernel}, Fig.~\ref{fig:supp-vmf}) and is required for our theory. Yet, we found that on the time-series datasets considered, it was possible to drop this term without loss in performance (i.e., $\alpha(\vy')=0$).
    \vspace{-.5em}

\section{Structural identifiability of non-linear latent dynamics}
\label{sec:theory}

    We now study the aforementioned model theoretically.
    The key components of our theory along with our notion of linear identifiability \citep{roeder2021linear, khemakhem2020variational} are visualized in Figure~\ref{fig:theorem}.
    We are interested in two properties. First, linear identifiability of the latent space: The composition of mixing function $\vg$ and model encoder $\vh$ should recover the ground-truth latents up to a linear transform. Second, identifiability of the (non-linear) dynamics model: We would like to relate the estimated dynamics $\hvf$ to the underlying ground-truth dynamics $\vf$. This property is also called \emph{structural identifiability} \citep{bellman1970structural}. Our model operates on a subclass of Eq.~\ref{eq:eq1} with the following properties:

    \textbf{Data-generating process.}
        We consider a discrete-time dynamical system defined as
        \begin{equation}\label{def:data-generation-rigorous}
        \begin{aligned}
            \vx_{t+1} = \vf(\vx_{t}) + \noise_t, \qquad
            \vy_{t} = \vg(\vx_{t}),
        \end{aligned}
        \end{equation}
        where $\vx_t \in \R^d$ are latent variables, $\vf: \R^d \mapsto \R^d$ is a bijective dynamics model, $\noise_t \in \R^d$ the system noise, and $\vg: \R^d \mapsto \R^D$ is a non-linear injective mapping from latents to observables $\vy_t \in \R^D$, $d \leq D$.
        We sample a total number of $T$ time steps.

    \begin{figure}[t]
        \begin{center}
        \includegraphics[width=.9\linewidth]{./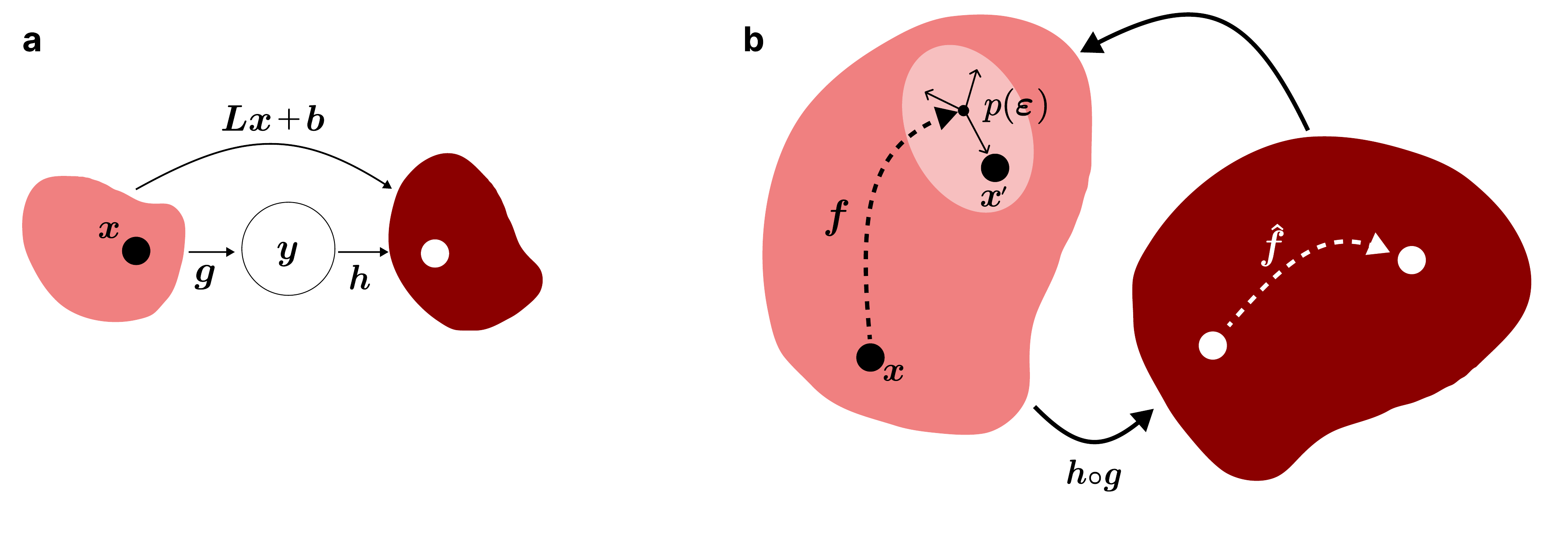}
        \end{center}
        \vspace{-2.0em}
        \caption{Graphical intuition behind Theorem~\ref{prop:1}. (a), the ground truth latent space is mapped to observables through the injective mixing function $\vg$. Our model maps back into the latent space. The composition of mixing and de-mixing by the model is an affine transform. (b), dynamics in the ground-truth space are mapped to the latent space. By observing variations introduced by the system noise $\noise$, our model is able to infer the ground-truth dynamics up to an affine transform.}
        \label{fig:theorem}
        \vspace{-1.5em}
    \end{figure}
    
    We proceed by stating our main result:
    \begin{theorem}[Contrastive estimation of non-linear dynamics]\label{prop:1}
        Assume that 
        \begin{itemize}[noitemsep, topsep=0pt]
            \item (A1) A time-series dataset $\{\vy_t\}_{t=1}^{T}$ is generated according to the ground-truth dynamical system in Eq.~\ref{def:data-generation-rigorous} with a bijective dynamics model $\vf$ and an injective mixing function $\vg$.
            \item (A2) The system noise follows an iid normal distribution, $p(\noise_t) = \mathcal{N}(\noise_t | 0, \mSigma_\noise)$.
            \item (A3) The model $\psi$ is composed of an encoder $\vh$, a dynamics model $\hvf$, a correction term $\alpha$, and the similarity metric $\phi(\vu, \vv) = -\|\vu - \vv\|^2$ and attains the global minimizer of Eq.~\ref{eq:minimizer}. 
        \end{itemize}
        Then, in the limit of $T \rightarrow \infty$ for any point $\vx$ in the support of the data marginal distribution:
        \begin{itemize}[noitemsep, topsep=0pt]
            \item[(a)] The composition of mixing and de-mixing $\vh(\vg(\vx)) = \mL \vx + \vb$ is a bijective affine transform, and $\mL = \mQ \mSigma_\epsilon^{-1/2}$ with unknown orthogonal transform $\mQ \in \R^{d\times d}$ and offset $\vb \in \R^d$.
            \item[(b)] The estimated dynamics $\hat \vf$ are bijective and identify the true dynamics $\vf$ up to the relation $\hvf(\vx) = \mL \vf(\mL^{-1}(\vx-\vb)) + \vb$.
        \end{itemize}
        \vspace{-0.75em}
        \begin{proof}
            See Appendix~\ref{proof:main-result} for the full proof, and see Fig.~\ref{fig:theorem} for a graphical intuition of both results.
        \end{proof}
    \end{theorem}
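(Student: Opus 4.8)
The plan is to turn the global optimality of the InfoNCE objective (Eq.~\ref{eq:minimizer}) into a pointwise identity between the learned critic and the true transition log-density, and then to read off the affine structure of the encoder and of the dynamics by a separability argument. I write $\ve := \vh\circ\vg:\R^d\to\R^d$ for the mixing/de-mixing composition and, for a reference/positive pair, $\vz=\vg^{-1}(\vy)$, $\vz'=\vg^{-1}(\vy')$ for the corresponding ground-truth latents (well defined since $\vg$ is injective).

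\emph{Step 1: reduce to density matching.} I would first use the standard characterisation of the (population) InfoNCE minimiser: since the hypothesis class is expressive enough to realise it, the global minimiser satisfies $\psi(\vy,\vy') = \log\frac{p(\vy_{t+1}=\vy'\mid\vy_t=\vy)}{q(\vy')} + c(\vy)$ for some function $c$ of the anchor only, where $q$ is the distribution of the negatives, i.e.\ the data marginal. By (A2), $\vz'\mid\vz\sim\mathcal N(\vf(\vz),\mSigma_\noise)$, and transporting this through the injective $\vg$ contributes a Gram--Jacobian factor that is identical in the conditional and in the marginal and hence cancels in the ratio. Substituting $\phi(\vu,\vv)=-\|\vu-\vv\|^2$ and rearranging, optimality is equivalent to
\[
\bigl\|\hvf(\ve(\vz))-\ve(\vz')\bigr\|^2 \;-\; \tfrac12\,(\vz'-\vf(\vz))^\top\mSigma_\noise^{-1}(\vz'-\vf(\vz)) \;=\; a(\vz') + b(\vz)
\]
for all $\vz,\vz'$ in the latent support, where $a$ absorbs the $\vy'$-only terms (the correction $\alpha$, the marginal log-density, constants) and $b$ the anchor-only terms; in particular the left-hand side is additively separable.

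\emph{Step 2: the composition $\vh\circ\vg$ is affine (part (a), structure).} Separability means the mixed second differential of the left-hand side in $(\vz,\vz')$ vanishes; carrying this out gives $2\,J_{\hvf\circ\ve}(\vz)^\top J_{\ve}(\vz') = J_{\vf}(\vz)^\top\mSigma_\noise^{-1}$ for all $\vz,\vz'$. The right-hand side does not depend on $\vz'$ and is invertible wherever $J_{\vf}$ is --- a dense open set, since $\vf$ is a bijection --- so fixing one such $\vz$ forces $J_{\ve}(\vz')$ to be a constant, invertible matrix $\mL$. Thus $\ve(\vz)=\mL\vz+\vb$ and $\vh(\vg(\vx))=\mL\vx+\vb$ is a bijective affine map. (If one does not want to assume differentiability of the model, the same conclusion follows from the finite-difference characterisation of additively separable functions.) Making this fully rigorous --- the latent support being full-dimensional, the image of $\vg$ being a submanifold when $d<D$, the locus where $J_{\vf}$ is nonsingular --- is the step I expect to be the main obstacle.

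\emph{Step 3: scale of $\mL$ and identification of $\hvf$ (part (a), scale, and part (b)).} Substituting the affine form back into the Step-1 identity and separating it into a genuinely bilinear part in $(\vz,\vz')$, a $\vz'$-only part, and a $\vz$-only part (the last one absorbed by the free $c$): in the $\vz'$-only part the correction $\alpha$ --- which in the $T\to\infty$ limit recovers the latent marginal log-density up to a constant, and this is exactly why a nontrivial $\alpha$ is needed for the theorem --- cancels the marginal arising from the density ratio, leaving the quadratic form $\mL^\top\mL-\tfrac12\mSigma_\noise^{-1}$ together with a residual linear term, both forced to vanish. The quadratic condition gives $\mL^\top\mL\propto\mSigma_\noise^{-1}$, i.e.\ $\mL=\mQ\mSigma_\noise^{-1/2}$ with $\mQ$ orthogonal, completing (a). Inserting $\mSigma_\noise^{-1}=2\mL^\top\mL$ and the vanishing residual term into the bilinear part yields $\mL^\top\hvf(\ve(\vz)) = \tfrac12\mSigma_\noise^{-1}\vf(\vz)+\mathrm{const}$, which reduces to $\hvf(\ve(\vz))=\mL\vf(\vz)+\vb$; substituting $\vz=\mL^{-1}(\vx-\vb)$ gives $\hvf(\vx)=\mL\vf(\mL^{-1}(\vx-\vb))+\vb$. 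Finally $\hvf$ is bijective, being a composition of the bijections $\vx\mapsto\mL^{-1}(\vx-\vb)$, $\vf$, and $\vx\mapsto\mL\vx+\vb$, which proves (b). Aside from Step 2, the only delicate points are tracking the linear/constant terms precisely enough to get the additive constant in $\hvf$ to be exactly $\vb$, and the scalar convention relating $\phi$ to the Gaussian normalisation.
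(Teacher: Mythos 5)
Your proposal is correct, and for the reduction and for part (a) it follows essentially the paper's path: invoke the known characterization of the population InfoNCE minimizer to get $\psi = \log p(\vx'\mid\vx) - \log q(\vx') + c(\vx)$ in latent coordinates (with the correction term $\alpha$ fixed to the log marginal, exactly the paper's footnoted KDE assumption; as you note, it must be this specific function rather than a freely learned one, otherwise the $\vx'$-quadratic matching that pins $\mL^\top\mL \propto \mSigma_\noise^{-1}$ would not be forced), then kill the mixed $(\vx,\vx')$ derivative of the resulting identity to force $\mJ_{\vh\circ\vg}$ constant, hence $\vr := \vh\circ\vg$ affine, and differentiate twice in $\vx'$ to get $\mL^\top\mL = \mLambda$ and $\mL = \mQ\mSigma_\noise^{-1/2}$ (the factor-of-$2$ convention you flag is present in the paper as well, which silently identifies $\mLambda$ with $\mSigma_\noise^{-1}$). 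Where you genuinely diverge is part (b): the paper proves uniqueness via an ansatz $\vf(\vx) = \mA_1\hvf(\mL\vx+\vb)+\vd_1+\vv(\vx)$, re-derives the Jacobian constraint to force the form of the residual, concludes $\mA = \mL^{-1}$, and only then pins the integration constant $\vd = -\mL^{-1}\vb$ and $c'=0$ by a further substitution; you instead substitute the affine $\vr$ into the quadratic identity and match coefficients of the resulting polynomial in $\vx'$, the linear-in-$\vx'$ coefficient giving $2\mL^\top\bigl(\hvf(\mL\vx+\vb)-\vb\bigr) = \mSigma_\noise^{-1}\vf(\vx)$ and hence $\hvf(\mL\vx+\vb) = \mL\vf(\vx)+\vb$ directly, with bijectivity of $\hvf$ read off from the conjugacy. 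This coefficient-matching route is shorter and avoids the ansatz entirely; its only loose end is the bookkeeping you already flag (the term $-2\vb^\top\mL\vx'$ is not forced to vanish on its own but combines with the cross term, and it does produce exactly the offset $\vb$). The regularity points you worry about (differentiability, full rank of $\mJ_\vf$ from bijectivity, a full-dimensional connected support) are glossed over by the paper to the same extent, so they do not put your argument behind its level of rigor.
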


    With this main result in place, we can make statements for several systems of interest; specifically linear dynamics in latent space:
    \begin{corollary}\! \label{cor:symmetric}
        \textls[-10]{Contrastive learning without dynamics model,}
        $\hvf(\vx)\!=\!\vx$,
        \textls[-10]{cannot identify latent dynamics.}
    \end{corollary}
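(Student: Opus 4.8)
The plan is to argue by contradiction, via two complementary routes.

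\emph{Route 1 (via Theorem~\ref{prop:1}).} The identity map is bijective, so if the model with $\hvf(\vx)=\vx$ satisfied the hypotheses of Theorem~\ref{prop:1}, part~(b) would give $\vx = \hvf(\vx) = \mL\vf(\mL^{-1}(\vx-\vb))+\vb$ for every $\vx$ in the support, i.e. $\vf=\mathrm{id}$. Contrapositively, whenever the ground-truth dynamics are not the identity, no encoder/correction pair can make the frozen model identify $\vf$. The only gap in this one-liner is that assumption~(A3) also demands attaining the \emph{global} minimizer of Eq.~\ref{eq:minimizer}; once $\hvf$ is frozen, the hypothesis class may no longer contain the true InfoNCE optimum, so this has to be checked — which is what Route~2 settles.

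\emph{Route 2 (self-contained).} I would reuse the step from the proof of Theorem~\ref{prop:1} stating that at the global optimum the score recovers the conditional log-density ratio, $\psi(\vy,\vy') = \log p(\vy'\mid\vy) - \log p(\vy') + c(\vy)$ on the data support. Under (A1)--(A2), pushing through the fixed injective $\vg$ gives $\log p(\vy'\mid \vy) = -\tfrac12\big(\vx'-\vf(\vx)\big)^\top \mSigma_\noise^{-1}\big(\vx'-\vf(\vx)\big) + r(\vx')$, where $r$ (the Gaussian normalizer plus the change-of-variables term of $\vg$) depends on $\vx'$ only, as does $\log p(\vy')$. Setting $\hvf=\mathrm{id}$, $\phi(\vu,\vv)=-\|\vu-\vv\|^2$, and writing $\va:=\vh\circ\vg$, optimality reads
\begin{equation*}
 -\|\va(\vx)\|^2 + 2\,\va(\vx)^\top\va(\vx') - \|\va(\vx')\|^2 - \alpha(\vg(\vx')) = -\tfrac12\big(\vx'-\vf(\vx)\big)^\top \mSigma_\noise^{-1}\big(\vx'-\vf(\vx)\big) + s(\vx') + c(\vx),
\end{equation*}
with $s,c$ collecting all terms depending on $\vx'$ resp. $\vx$ alone. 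I would then annihilate every separable term with the mixed second difference $(\vx_0,\vx_1,\vx_0',\vx_1')\mapsto F(\vx_1,\vx_1')-F(\vx_1,\vx_0')-F(\vx_0,\vx_1')+F(\vx_0,\vx_0')$ applied to both sides, which leaves the purely bilinear identity
\begin{equation*}
 2\,\big(\va(\vx_1)-\va(\vx_0)\big)^\top\big(\va(\vx_1')-\va(\vx_0')\big) = \big(\vx_1'-\vx_0'\big)^\top \mSigma_\noise^{-1}\big(\vf(\vx_1)-\vf(\vx_0)\big)
\end{equation*}
for all quadruples in the support. Fixing $\vx_0,\vx_1$ and varying $\vx_0',\vx_1'$ shows $\langle \vw,\va(\cdot)\rangle$ is affine for every $\vw$ in the span of the differences $\va(\vx_1)-\va(\vx_0)$; non-degeneracy of $\va$ (whose failure would itself preclude identifiability) then forces $\va(\vx)=\mL\vx+\vb$. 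Substituting back and varying $\vx_1'-\vx_0'$ over a spanning set gives $\mSigma_\noise^{-1}\big(\vf(\vx_1)-\vf(\vx_0)\big) = 2\,\mL^\top\mL\,(\vx_1-\vx_0)$, i.e. $\vf$ must be affine, $\vf(\vx) = 2\mSigma_\noise\mL^\top\mL\,\vx + \vc$. Hence if $\vf$ is non-affine the optimum of Eq.~\ref{eq:minimizer} is unattainable in the class with $\hvf=\mathrm{id}$, and if $\vf$ is affine the learned model still carries $\hvf=\mathrm{id}\neq\vf$ and exposes nothing about $\vf$ beyond $\mL^\top\mL$ entangled with the unknown noise covariance $\mSigma_\noise$; in neither case are the latent dynamics identified.

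\textbf{Main obstacle.} The delicate point is the low-dimensional-support regime $d<D$: then $p(\vy'\mid\vy)$ is a density on the $d$-dimensional image manifold of $\vg$, and I must argue carefully that the change-of-variables contribution $r(\vx')$ is genuinely a function of $\vx'$ alone (it is, because $\vg$ is a \emph{fixed} injection), so that it is killed by the second-difference operator; this is exactly the bookkeeping already carried out in the proof of Theorem~\ref{prop:1} and can be imported verbatim. A secondary point is ensuring the difference vectors $\va(\vx_1)-\va(\vx_0)$ and $\vx_1-\vx_0$ sweep spanning sets of $\R^d$, which follows from the data marginal having full-dimensional support together with the affineness (hence either invertibility or total degeneracy) of $\va$. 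The rest is routine.
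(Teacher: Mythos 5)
Your Route 1 is precisely the paper's own proof: the authors simply argue that if the frozen-identity model satisfied the conclusion of Theorem~\ref{prop:1}(b), then already for linear ground truth $\vf(\vx)=\mA\vx$ one would need $\vx=\mL\mA\mL^{-1}\vx+\vb$ for all $\vx$, which is impossible unless $\mA=\mI$; nothing more is said in the main text. Your Route 2 is a genuinely different, self-contained argument, and it closes exactly the gap you flag in Route 1 (whether the restricted class with $\hvf=\mathrm{id}$ can attain the unconstrained InfoNCE optimum at all), a point the paper's two-line proof glosses over and only implicitly covers via its appendix: your bilinear identity obtained from mixed second differences is the finite-difference counterpart of App.~Eq.~\ref{eq:mixing-function-constraint}, which with $\mJ_\hvf=\mI$ forces $\vr=\vh\circ\vg$ to be affine and $\mLambda\mJ_\vf(\vx)=\mL^\top\mL$ to be constant, i.e.\ $\vf$ affine with $\mA=\mLambda^{-1}\mL^\top\mL$ (your factor $2$ is just the $\tfrac12$ of the Gaussian log-density that the paper absorbs into $\mLambda$). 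Your discrete-difference derivation avoids differentiating the encoder composition, so it is marginally more elementary, and it makes explicit the attainability argument the paper leaves to the reader. One small imprecision in your closing: even when $\vf$ is affine, attainment of the optimum additionally requires $\mSigma_\noise^{-1}\mA$ to equal a Gram matrix $\mL^\top\mL$ (up to the constant), hence to be symmetric positive definite, so generic affine dynamics --- e.g.\ the rotation matrices used in the paper's experiments --- are also excluded; your sentence reads as if any affine $\vf$ admitted attainment, though your overall conclusion that the dynamics are not identified in either case stands.
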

    In this case, even for a linear ground truth dynamics model, $\vf(\vx)=\mA \vx$, we would require that after model fitting, $\hvf(\vx) = \vx = \mL \mA \mL^{-1} \vx + \vb$, which is impossible (Theorem 1b; also see App.~Eq.~\ref{eq:mixing-function-constraint}).
    We can fix this case by either decoupling the two encoders  (Appendix~\ref{appendix:vmf}), or taking a more structured approach and parameterizing a dynamics model with a dynamics matrix:
    \begin{corollary} \label{cor:dyncl-lds}
        For a ground-truth linear dynamical system $\vf(\vx) = \mA\vx$ and dynamics model $\hvf(\vx) = \hat \mA \vx$, we identify the latents up to $\vh(\vg(\vx)) = \mL\vx + \vb$ and dynamics with $\hat \mA = \mL \mA \mL^{-1}$.
    \end{corollary}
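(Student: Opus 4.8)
The plan is to obtain the corollary as a direct specialization of Theorem~\ref{prop:1}, with one extra ingredient: checking that restricting the dynamics model to the linear family $\hvf(\vx)=\hat\mA\vx$ does not change the optimal InfoNCE value, so that a global minimizer inside this restricted hypothesis class is still a global minimizer of Eq.~\ref{eq:minimizer} and Theorem~\ref{prop:1} applies verbatim.

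First I would establish \emph{realizability}: the ground-truth system with $\vf(\vx)=\mA\vx$ and Gaussian noise is representable within the restricted class. Concretely, take the orthogonal part to be the identity and the offset to be zero, i.e.\ $\vh = \mSigma_\noise^{-1/2}\vg^{-1}$ (well defined since $\vg$ is injective) and $\hat\mA = \mSigma_\noise^{-1/2}\mA\mSigma_\noise^{1/2}$, with $\alpha$ the matching density correction. Then the induced latent $\vz_t = \mSigma_\noise^{-1/2}\vx_t$ evolves as $\vz_{t+1} = \hat\mA\vz_t + \mSigma_\noise^{-1/2}\noise_t$ with isotropic Gaussian innovations, so with $\phi(\vu,\vv)=-\|\vu-\vv\|^2$ this configuration reproduces the true transition kernel up to the affine change of variables used in the proof of Theorem~\ref{prop:1}, hence attains the same loss as the unrestricted optimum. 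Since the linear family is a subset of the full class, its optimal value lies between the unrestricted optimum and the loss of this particular configuration, which coincide; therefore any global minimizer with $\hvf(\vx)=\hat\mA\vx$ is a global minimizer in the sense of (A3).

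Next I would invoke Theorem~\ref{prop:1}(a) directly to get $\vh(\vg(\vx)) = \mL\vx+\vb$ (with $\mL = \mQ\mSigma_\noise^{-1/2}$), which is exactly the claimed latent identifiability. For the dynamics, substitute the linear forms $\vf(\vx)=\mA\vx$ and $\hvf(\vx)=\hat\mA\vx$ into the relation of Theorem~\ref{prop:1}(b), $\hvf(\vx) = \mL\vf(\mL^{-1}(\vx-\vb))+\vb$, obtaining $\hat\mA\vx = \mL\mA\mL^{-1}\vx + (\mI - \mL\mA\mL^{-1})\vb$ for every $\vx$ in the support of the model's latent marginal. Because the system noise is Gaussian, this marginal has nonempty interior in $\R^d$, so a linear map and an affine map agreeing on it agree as maps on all of $\R^d$; matching the linear part yields $\hat\mA = \mL\mA\mL^{-1}$, and matching the constant part yields the compatibility condition $(\mI-\hat\mA)\vb = 0$, which is automatically met (e.g.\ by $\vb=0$) and does not affect the stated conclusion.

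The main obstacle I anticipate is the realizability/well-posedness step — making precise that constraining $\hvf$ to be linear, and in particular to carry no affine offset, still leaves the global optimum of the contrastive objective attainable — rather than the argument that a given optimum has the claimed form, which is immediate from Theorem~\ref{prop:1}. A secondary technical point is justifying that the latent support is full-dimensional (so that ``equal on the support'' upgrades to ``equal everywhere''); this follows from (A2) but should be stated explicitly, since it is what rules out spurious affine offsets in $\hvf$ and pins down $\hat\mA$ uniquely.
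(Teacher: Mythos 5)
Your proof is correct and follows essentially the same route as the paper, which treats this corollary as a direct specialization of Theorem~\ref{prop:1}: part (a) gives $\vh(\vg(\vx)) = \mL\vx + \vb$ directly, and substituting the linear forms $\vf(\vx)=\mA\vx$, $\hvf(\vx)=\hat\mA\vx$ into part (b) and matching linear and constant terms on the full-dimensional latent support yields $\hat\mA = \mL\mA\mL^{-1}$ (with the leftover condition $(\mI-\hat\mA)\vb = \vzero$ being a constraint on the fitted offset $\vb$ rather than something one gets to choose, a minor wording slip on your part). Your realizability check that the offset-free linear dynamics class still attains the global InfoNCE optimum, and the explicit note that the Gaussian marginal has full-dimensional support, are sensible extra diligence that the paper leaves implicit rather than a genuinely different argument.
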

    This means that simultaneously fitting the system dynamics and encoding model allows us to recover the system matrix up to an indeterminacy.

    \textbf{Note on Assumptions.}
        The required assumptions are rather practical: (A1) allows for a very broad class of dynamical systems as long as bijectivity of the \emph{dynamics model} holds, which is the case of many systems used in the natural sciences. We consider dynamical systems with control signal $\vu_t$ in Appendix~\ref{app:control}. While (A2) is a very common one in dynamical systems modeling, it can be seen more strict: We either need knowledge about the form of system noise, or inject such noise. We should note that analogous to the discussion of \citet{Zimmermann2021ContrastiveLI}, it is most certainly possible to extend our results towards other classes of noise distributions by matching the log-density of $\noise$ with $\phi$. Given the common use of Normally distributed noise, however, we limited the scope of the current theory to the Normal distribution, but show vMF noise in Appendix~\ref{appendix:vmf}. (A3) mainly concerns the model setup.
        An apparent limitation of Def.~\ref{def:data-generation-rigorous} is the injectivity assumption imposed on the mixing function $\vg$. In practice, a \emph{partially observable} setting often applies, where $\vg(\vx) = \mC \vx$ maps latents into lower dimensional observations or has a lower rank than there are latent dimensions. For these systems, we can ensure injectivity through a time-lag embedding. See Appendix~\ref{app:non_injective} for empirical validation.

\section{\texorpdfstring{$\nabla$}{Differentiable}-SLDS: Towards non-linear dynamics estimation}
\label{sec:slds}

    \begin{wrapfigure}{r}{0.3\textwidth}
        \centering
        \includegraphics[width=0.28\textwidth]{./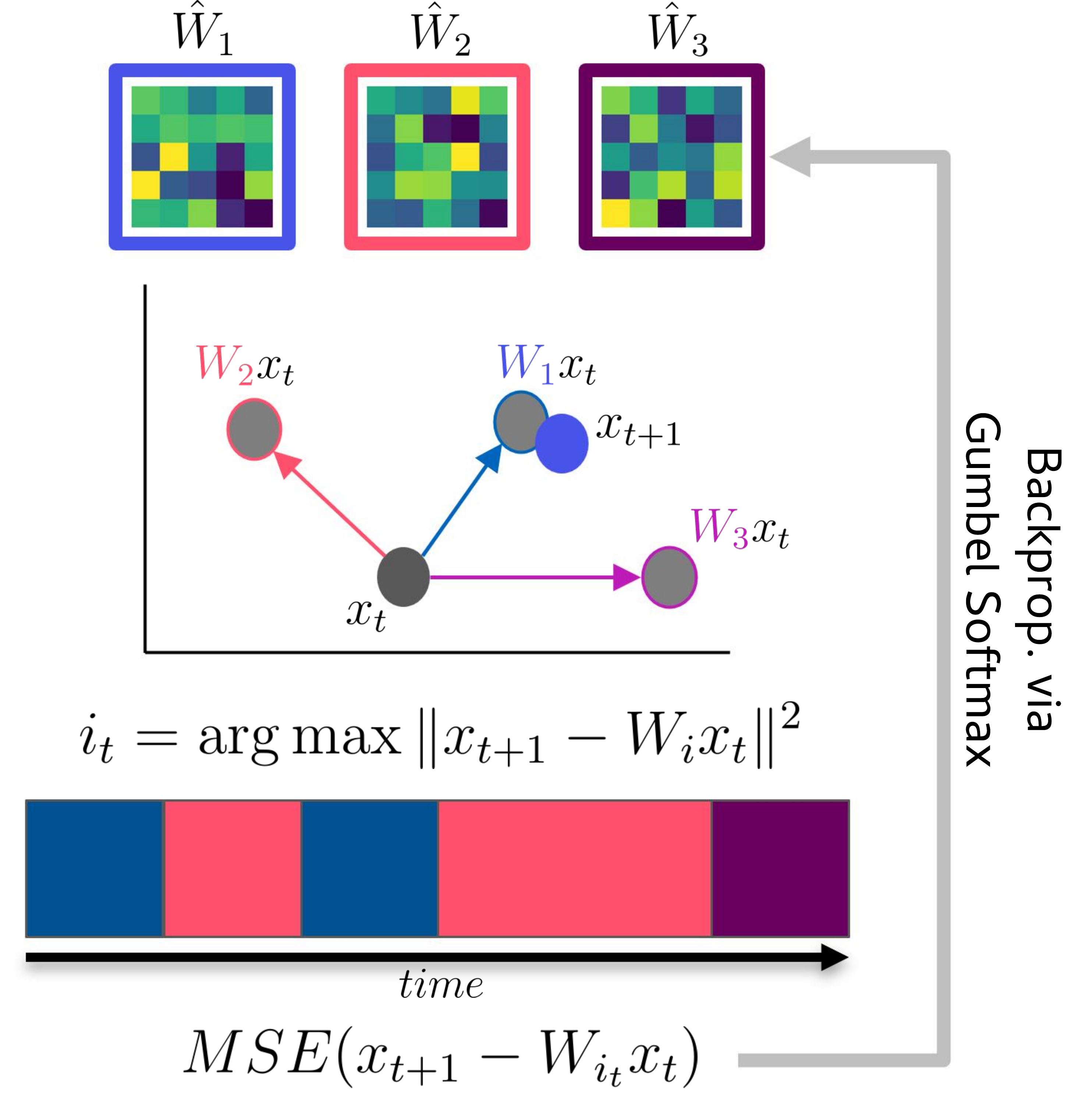}
        \caption{The core components of the $\nabla$-SLDS model is parameter-free, differentiable parameterization of the switching process.}
        \label{fig:gumbel}
        \vspace{-1em}
    \end{wrapfigure}

    \paragraph{Piecewise linear approximation of dynamics.}
        Our theoretical results suggest that contrastive learning allows the fitting of non-linear bijective dynamics. This is a compelling result, but in practice it requires the use of a powerful, yet easy to parameterize dynamics model. One option is to use an RNN \citep{elman1990finding,oord2018representation} or a Transformer \citep{vaswani2017attention} model to perform this link across timescales.
        An alternative option is to linearize the system, which we propose in the following.

    We propose a new forward model for differentiable switching linear dynamics ($\nabla$-SLDS) in latent space. The estimation is outlined in Figure~\ref{fig:gumbel}.
    This model allows fast estimation of switching dynamics and can be easily integrated into the \textsc{DCL} algorithm. 
    The dynamics model has a trainable bank $\tW = [\mW_1, \dots, \mW_K]$ of possible dynamics matrices. $K$ is a hyperparameter. The dynamics depend on a latent variable $k_t$ and are defined as
    \begin{align}
        \hat \vf(\vx_t; \tW, k_t) = \mW_{k_t} \vx_{t}, \quad
        k_t = \mathrm{arg min}_k \| \mW_k \vx_{t} - \vx_{t+1} \|^2.
    \end{align}
    Intuitively, the predictive performance of every available linear dynamical system is used to select the right dynamics with index $k_t$ from the bank $\tW$.
    During training, we approximate the argmin using the Gumbel-Softmax trick \citep{jang2016categorical} without hard sampling:
    \begin{align}
        \hat \vf(\vx_t; \tW, \vz_t) = (\sum_{k=1}^K z_{t,k} \mW_{k}) \vx_{t}, \ 
        z_{t,k} = \frac{\exp(\lambda_k / \tau)}{\sum_j \exp(\lambda_j / \tau)}, \ 
        \lambda_k = \frac{1}{\| \mW_k \vx_{t} - \vx_{t+1} \|^{2}} + g_k.
    \end{align}
    Note that the dynamics model $\hat \vf(\vx_t; \tW, \vz_t)$
    depends on an additional latent variable $\vz_t = [z_{t,1}, \dots, z_{t,K}]^\top$ which contains probabilities to parametrize the dynamics. During inference, we can obtain the index $k_t = \argmax_k z_{t,k}$. 
    The variables $g_k$ are samples from the Gumbel distribution~\citep{jang2016categorical} and we use a temperature $\tau$ to control the smoothness of the resulting probabilities. During pilot experiments, we found that the reciprocal parameterization of the logits outperforms other choices for computing an argmin, like flipping the sign.

\textbf{From linear switching to non-linear dynamics.}
    Non-linear system dynamics of the general form in Eq.~\ref{def:data-generation-rigorous} can be approximated using our switching model.
    We can approximate a continuous-time non-linear dynamical system with latent dynamics $\dot \vx = \vf(\vx)$ around reference points $\{\tvx_{k}\}_{k=1}^{K}$ using a first-order Taylor expansion,
        $\vf(\vx) \approx \tilde \vf(\vx) = \vf(\tvx_k) + \mJ_\vf(\tvx_{k}) (\vx - \tvx_k)$,
    where we denote the Jacobian matrix of $\vf$ with $\mJ_\vf$.
    We evaluate the equation at each point $t$ using the best reference point $\tvx_k$. We obtain system matrices 
        $\mA_k = \mJ_\vf(\tvx_k)$ and bias term $\vb_k = \vf(\tvx_k) - \mJ_\vf(\tvx_k)\tvx_k$
    which can be modeled with the $\nabla$-SLDS model $\hvf(\vx_t; k_t)$:
    \begin{align}
        \vx_{t+1} = (\mA_{k_t} \vx_t + \vb_{k_t}) + \noise_t =: \hvf(\vx_t; k_t) + \noise_t.
    \end{align}
    While a theoretical guarantee for this general case is beyond the scope of this work, we give an empirical evaluation on Lorenz attractor dynamics below.
    Note, as the number of ``basis points'' of $\nabla$-SLDS approaches the number of time steps, we could trivially approach perfect estimation capability of the latents as we store the exact value of $\vf$ at every point. However, this comes at the expense of having less points to estimate each individual dynamics matrix. Empirically, we used 100--200 matrices for datasets of 1M samples.

\section{Experiments}
\label{sec:experiments}
    
    To verify our theory, we implement a benchmark dataset for studying the effects of various model choices. We generate time-series with 1M samples, either as a single sequence or across multiple trials. Our experiments rigorously evaluate different variants of contrastive learning algorithms. %
    
    \paragraph{Data generation.}
        Data is generated by simulating latent variables $\vx$ that evolve according to a dynamical system (Eq.~\ref{def:data-generation-rigorous}). These latent variables are then passed through a nonlinear mixing function $\vg$ to produce the observable data $\vy$. The mixing function $\vg$ consists of a nonlinear injective component which is parameterized by a randomly initialized 4-layer MLP \citep{hyvarinen2016unsupervised}, and a linear map to a 50-dimensional space. The final mixing function is defined as their composition. We ensure the injectivity of the resulting function by monitoring the condition number of each matrix layer, following previous work \citep{hyvarinen2016unsupervised, Zimmermann2021ContrastiveLI}. 
       
    \paragraph{LDS.} 
    We simulate 1M datapoints in 3D space following $\vf(\vx_t) = \mA \vx_t$ with system noise standard deviation $\sigma_\epsilon = 0.01$ and choose $\mA$ to be an orthogonal matrix to ensure stable dynamics with all eigenvalues equal to 1. We do so by taking the product of multiple rotation matrices, one for each possible plane to rotate around with rotation angles being randomly chosen to be -5° or 5°.
    
    \paragraph{SLDS.}
        We simulate switching linear dynamical systems with $\vf(\vx_t;k_t) = \mA_{k_t} \vx_t$  and system noise standard deviation $\sigma_\epsilon=0.0001$. We choose $\mA_k$ to be an orthogonal matrix ensuring that all eigenvalues are 1, which guarantees system stability. Specifically, we set $\mA_k$ to be a rotation matrix with varying rotation angles (5°, 10°, 20°). The latent dimensionality is 6. The number of samples is 1M. We use 1000 trials, and each trial consists of 1000 samples. We use $k = {0,1, \dots, K}$ distinct modes following a mode sequence $i_t$. The mode sequence $i_t$ follows a Markov chain with a symmetric transition matrix and uniform prior: $i_0 \sim \text{Cat}(\pi)$, where $\pi_j = \frac{1}{K}$ for all $j$. At each time step, $i_{t+1} \sim \text{Cat}(\Pi_{i_{t}})$, where $\Pi$ is a transition matrix with uniform off-diagonal probabilities set to $10^{-4}$.
        Example data is visualized in Figure~\ref{fig:slds} and Appendix~\ref{app:slds-additional-plots}.
    
    \paragraph{Non-linear dynamics.}
        We simulate 1M points of a Lorenz system, with equations
        \begin{align}
            \vf(\vx_{t}) = \vx_{t} + dt [
                \sigma (x_{2,t} - x_{1,t}),  
                x_{1,t} ((\rho - x_{3,t}) - x_{2,t}),  
                (x_{1,t}x_{2,t} - \beta x_{3,t})
            ]^\top,
        \end{align}
        with varying $dt$,
        parameters $\sigma = 10, \, \beta = \frac{8}{3}, \, \rho = 28$ and system noise standard deviation $\sigma_\epsilon=0.001$. The observable data, $\vy$. We then apply our non-linear mixing function as for other datasets.
    
    \paragraph{Model estimation.}
        For the feature encoder $\vh$, baseline and our model use an MLP with three layers followed by GELU activations \citep{hendrycks2016gaussian}. Model capacity scales with the embedding dimensionality $d$. The last hidden layer has $10d$ units and all previous layers have $30d$ units. For the SLDS and LDS datasets, we train on batches with 2048 samples each (reference and positive). We use $2^{16}=65536$ negative samples for SLDS and $20k$ negative samples for LDS data. For the Lorenz data, we use a batch size of 1024 and 20k negative samples. We use the Adam optimizer \citep{kingma2014adam} with learning rates $3\times10^{-4}$ for LDS data, $10^{-3}$ for SLDS data, and $10^{-4}$ for Lorenz system data. For the SLDS data, we use a different learning rate of $10^{-2}$ for the parameters of the dynamics model. We train for 50k steps on SLDS data and for 30k steps for LDS and Lorenz system data.
        Our baseline model is standard self-supervised contrastive learning with the InfoNCE loss, which is similar to the CEBRA-time model (with symmetric encoders, i.e., without a dynamics model; cf. \citealp{schneider2023learnable}). For $\textsc{DCL}$, we add an LDS or $\nabla$-SLDS dynamics model for fitting. For our baseline, we post-hoc fit the corresponding model on the recovered latents minimizing the predictive mean squared error via gradient descent. 
        
    \begin{table}[tp]
    \centering
    \caption{Overview about identifiability of latent dynamics for different modeling choices: We show different \emph{data} generating processes characterized by the form of the ground truth dynamics $\vf$, the distribution $p(\noise)$ and different \textit{model} choices for the estimated dynamics $\hvf$. We compare identity dynamics, linear dynamics (LDS), switching linear dynamics (SLDS), and Lorenz attractor dynamics (Lorenz), and optionally initialize the dynamics model with the ground-truth dynamics (GT).
    For every combination we indicate whether we can provide theoretical identifiability guarantees (``theory'') and compare this to empirical identifiability measures ($R^2$, LDS, dynR$^2$).
    Mean $\pm$ std. are across 3 datasets (5 for Lorenz) and 3 experiment repeats.}
    \newcommand{\pzero}{\phantom{0}}
    \label{tab:theory}
    \footnotesize
    \begin{tabular}{cccccc}
        \toprule
        \multicolumn{2}{c}{Data} & \multicolumn{1}{c}{Model} & \multicolumn{3}{c}{Results} \\ 
        $\vf$ & $p( \noise)$ & $\hat \vf$ & identifiable &\%$R^2$ $\uparrow$ & LDS [$\times 10^{-2}$] $\downarrow$ \\
        \cmidrule(r){1-2} \cmidrule(r){3-3} \cmidrule{4-6}
        identity &  Normal & identity & \cmark          & 99.56 $\pm$ 0.21 & 0.00 $\pm$ 0.00 \\
        identity &  Normal & LDS & \cmark               & 99.31 $\pm$ 0.43 & 0.04 $\pm$ 0.01 \\
        \cmidrule(r){1-2} \cmidrule(r){3-3} \cmidrule{4-6}
        LDS (low $\Delta t$) &  Normal (large $\sigma$) & identity & -- & 89.22 $\pm$ 4.47 & 8.53 $\pm$ 0.05 \\
        LDS &  Normal & identity & \xmark               & 73.56 $\pm$ 24.45 & 21.24 $\pm$ 0.31 \\
        LDS &  Normal & LDS & \cmark                    & 99.03 $\pm$ 0.41 & 0.77 $\pm$ 1.07 \\
        LDS &  Normal & GT & \cmark                     & 99.46 $\pm$ 0.39 & 0.44 $\pm$ 0.43 \\
        \cmidrule(r){1-2} \cmidrule(r){3-3} \cmidrule{4-6}
        &&&& \%$R^2$ $\uparrow$ & \%dyn$R^2$ $\uparrow$  \\
        \cmidrule(r){1-2} \cmidrule(r){3-3} \cmidrule{4-6}
        SLDS &  Normal & identity & \xmark              & 76.80 $\pm$ 7.40 & 85.47 $\pm$ 8.07 \\
        SLDS &  Normal & $\nabla$-SLDS & (\cmark)\tablefootnote[1]{
            Not explicitly shown, but the argument in Corollary~2 applies to each piecewise linear section of the SLDS.
        } & 99.52 $\pm$ 0.05 & 99.93 $\pm$ 0.01 \\
        SLDS &  Normal & GT & (\cmark)$^1$ & 99.20 $\pm$ 0.10 & 99.97 $\pm$ 0.00 \\
        \cmidrule(r){1-2} \cmidrule(r){3-3} \cmidrule{4-6}
        Lorenz (small $\Delta t$) &  Normal (large $\sigma$) & identity & -- & 99.74 $\pm$ 0.36 & 99.94 $\pm$ 0.07 \\
        Lorenz (small $\Delta t$) &  Normal (large $\sigma$) & LDS & -- & 98.31 $\pm$ 2.55 & 97.21 $\pm$ 5.90 \\
        Lorenz (small $\Delta t$) &  Normal (large $\sigma$) & $\nabla$-SLDS & -- & 94.14 $\pm$ 4.34 & 94.20 $\pm$ 6.57 \\
        \cmidrule(r){1-2} \cmidrule(r){3-3} \cmidrule{4-6}
        Lorenz &  Normal & identity & \xmark & 40.99 $\pm$ 8.58 & 27.02 $\pm$ 8.72 \\
        Lorenz &  Normal & LDS & \xmark & 81.20 $\pm$ 16.93 & 80.30 $\pm$ 14.13 \\
        Lorenz &  Normal & $\nabla$-SLDS & (\cmark)\tablefootnote[2]{
            $\nabla$-SLDS is only an approximation of the functional form of the underlying system.} & 94.08 $\pm$ 2.75 & 93.91 $\pm$ 5.32 \\
        \bottomrule
    \end{tabular}
    \end{table}
    
    \paragraph{Evaluation metrics.}
        Our metrics are informed by the result in Theorem~\ref{prop:1} and measure empirical identifiability up to affine transformation of the latent space and its underlying linear or non-linear dynamics. 
        All metrics are estimated on the dataset the model is fit on. See Appendix~\ref{app:train-vs-test} for additional discussion on estimating metrics on independently sampled dynamics.
        
        To account for the affine indeterminacy, we estimate $\mL$, $\vb$ for $\hat \vx = \mL \vx + \vb$ which allows us to map ground truth latents $\vx$ to recovered latents $\hat \vx$ (cf. Theorem~\ref{prop:1}a). 
        In cases where the inverse transform $\vx = \mL^{-1}(\hat \vx - \vb)$ is required, we can either compute $\mL^{-1}$ directly, or for the purpose of numerical stability estimate it from data, which we denote as $\mL'$.
        The values of $\mL$,$\vb$ and $\mL'$,$\vb'$ are computed via a linear regression: 
        \vspace{-0.5em}
        \begin{align}
            \min_{\mL, \vb} \sum_{t=1}^T \| \hat{\vx}_t - (\mL \vx_t + \vb) \|_2^2 \qquad \text{ and }
            \qquad
            \min_{\mL', \vb'} \sum_{t=1}^T \| \vx_t - (\mL' \hat{\vx}_t + \vb') \|_2^2.
        \end{align} 
        To evaluate the identifiability of the representation, we measure the $R^2$ between the true latents $\vx_t$ and the optimally aligned recovered latents $\mL' \hat \vx_t + \vb'$ across time steps $t=1\dots T$ in the time-series.
         
        We also propose two metrics as direct measures of identifiability for the recovered dynamics $\hat \vf$. For linear dynamics models, we introduce the LDS error. It denotes the norm of the difference between the true dynamics matrix $\mA$ and the estimated dynamics matrix $\hat \mA$ by accounting for the linear transformation between the true and recovered latent spaces. The LDS error (related to the metric for Dynamical Similarity Analysis; \citealp{ostrow2023geometrycomparingtemporalstructure}) is computed as (cf. Corollary~\ref{cor:dyncl-lds}):
        \begin{equation}
        	\text{LDS}(\mA, \hat \mA)
                =  \|\mA - \mL^{-1}\hat \mA \mL \|_F.
        \end{equation}
        
        As a second, more general identifiability metric for the recovered dynamics $\hat \vf$, we introduce dyn$R^2$, which builds on Theorem~\ref{prop:1}b to evaluate the identifiability of non-linear dynamics. This metric computes the $R^2$ between the predicted dynamics $\hat \vf$ and the true dynamics $\vf$, corrected for the linear transformation between the two latent spaces.
        Specifically, motivated by Theorem~\ref{prop:1}(b), we compute
        \begin{equation}\label{eq:dynr2}
            \text{dyn}R^2(\vf, \hat \vf) = \text{r2\_score}(\hat \vf(\hat\vx), \mL \vf(\mL' \hat\vx + \vb') + \vb)
        \end{equation}
        along all time steps. Additional variants of the dynR$^2$ metric are discussed in Appendix~\ref{app:dynr2}.
        
        Finally, when evaluating switching linear dynamics, we compute the accuracy for assigning the correct mode at any point in time. To compute the cluster accuracy in the case of SLDS ground truth dynamics, we leverage the Hungarian algorithm to match the estimated latent variables modeling mode switches to the ground truth modes, and then proceed to compute the accuracy.
    
    \paragraph{Implementation.}
        Experiments were carried out on a compute cluster with A100 cards. On each card, we ran $\sim$3 experiments simultaneously. Depending on the exact configuration, training time varied from 5--20min per model. The combined experiments ran for this paper comprised about 120 days of A100 compute time and we provide a breakdown in Appendix \ref{app:compute_breakdown}. We will open source our benchmark suite for identifiable dynamics learning upon publication of the paper.

\section{Results}
\label{sec:results}
    
    \begin{figure}[tp]
        \centering
        \includegraphics[width=\linewidth]{./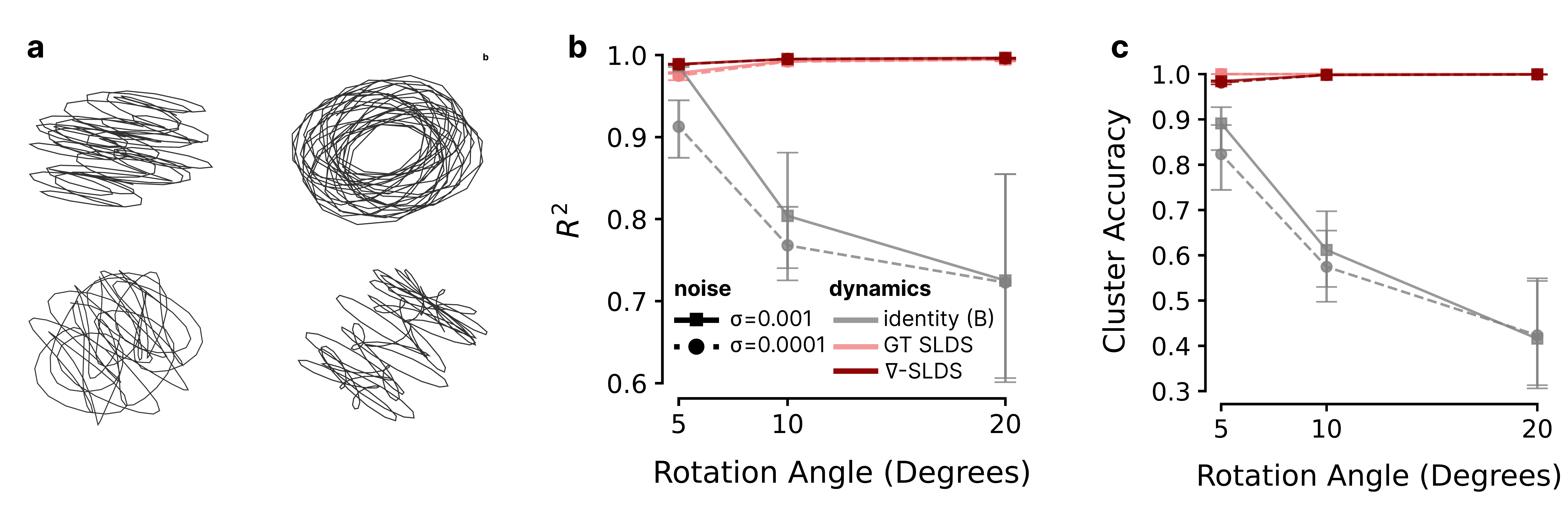}
        \caption{Switching linear dynamics: (a) example ground-truth dynamics in latent space for four matrices $\mA_k$. (b) $R^2$ metric for different noise levels as we increase the angles used for data generation. We compare a baseline (no dynamics) to $\nabla$-SLDS and a model fitted with ground-truth dynamics. (c) cluster accuracies for models shown in (b).}
        \label{fig:slds}
        \vspace{-1em}
    \end{figure}

\subsection{Verification of the theory for linear dynamics}
    
    \paragraph{Suitable dynamics models enable identification of latents and dynamics.}
        For all considered classes of models, we show in Table~\ref{tab:theory} that \textsc{DCL} with a suitable dynamics model effectively identifies  the correct dynamics. For linear dynamics (LDS), \textsc{DCL} reaches an $R^2$ of 99.0\%, close to the oracle performance (99.5\%). Most importantly, the average LDS error of our method (7.7$\times 10^{-3}$) is very close to the oracle (4.4$\times10^{-3}$), in contrast to the baseline model (2.1$\times10^{-1}$) which has a substantially larger LDS error.
        In the case of switching linear dynamics (SLDS), \textsc{DCL} also shows strong performance, both in terms of latent $R^2$ (99.5\%) and dynamics $R^2$ (99.9\%) outperforming the respective baselines (76.8\% $R^2$ and 85.5\% dynamics $R^2$).
        For non-linear dynamics, the baseline model fails entirely (41.0\%/27.0\%), while $\nabla$-SLDS dynamics can be fitted with 94.1\% $R^2$ for latents and $93.9$\% dynamics $R^2$. We also clearly see the strength of our piecewise-linear approximation, as the LDS dynamics models only reaches $81.2\%$ latent identifiability and $80.3\%$ dynamics $R^2$.

    \paragraph{Learning noisy dynamics does not require a dynamics model.}
        If the variance of the distribution for $\noise_t$ dominates the changes actually introduced by the dynamics, we find that the baseline model is also able to identify the latent space underlying the system. Intuitively, the change introduced by the dynamical system is then negligible compared to the noise. In Table~\ref{tab:theory} (``large $\sigma$''), we show that recovery is possible for cases with small angles, both in the linear and non-linear case. While in some cases, this learning setup might be applicable in practice, it seems generally unrealistic to be able to perturb the system beyond the actual dynamics.
        As we scale the dynamics to larger values (Figure~\ref{fig:slds}, panel b and c), the estimation scheme breaks again.
        However, this property offers an explanation for the success of existing contrastive estimation algorithms like CEBRA-time \citep{schneider2023learnable} which successfully estimate dynamics in absence of a dynamics model.
     
    \paragraph{Symmetric encoders cannot identify non-trivial dynamics.}
        In the more general case where the dynamics dominates the system behavior, the baseline cannot identify linear dynamics (or more complicated systems). In the general LDS and SLDS cases, the baseline fails to identify the ground truth dynamics (Table~\ref{tab:theory}) as predicted by Corollary~\ref{cor:symmetric} (rows marked with \xmark).
        For identity dynamics, the baseline is able to identify the latents ($R^2$=99.56\%) but breaks as soon as linear dynamics are introduced ($R^2$=73.56\%).
    
    \subsection{Approximation of non-linear dynamics}
    
        \begin{figure}[t]
            \centering
            \includegraphics[width=1.0\linewidth]{./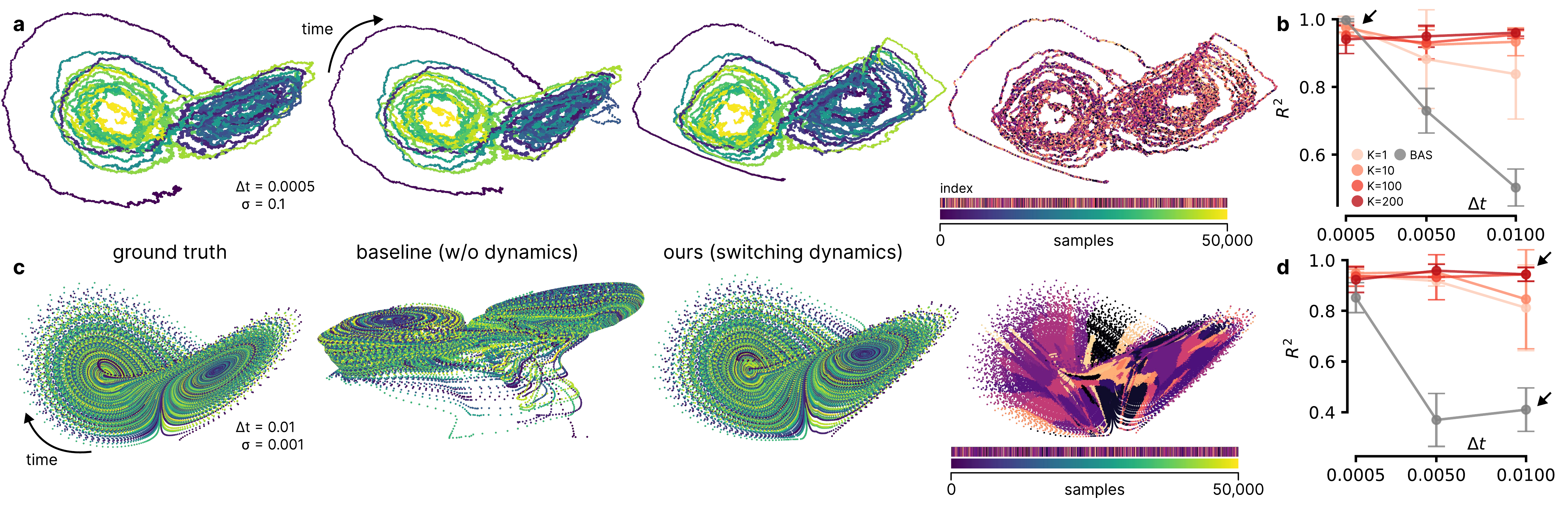}
            \caption{Contrastive learning of 3D non-linear dynamics following a Lorenz attractor model. (a), left to right: ground truth dynamics for 10k samples with $dt = 0.0005$ and $\sigma=0.1$, estimation results for baseline (identity dynamics), \textsc{DCL} with $\nabla$-SLDS, estimated mode sequence. (b), empirical identifiability ($R^2$) between baseline (BAS) and $\nabla$-SLDS for varying numbers of discrete states $K$. (c, d), same layout but for $dt = 0.01$ and $\sigma = 0.001$.
            }
            \label{fig:lorenz}
            \vspace{-1em}
        \end{figure}

        Next, we study in more details how the \textsc{DCL} can identify piecewise linear or non-linear latent dynamics using the $\nabla$-SLDS dynamics model.
    
        \paragraph{Identification of switching dynamics.}
            Switching dynamics are depicted in Fig.~\ref{fig:slds}a for four different modes of the 10 degrees dataset.
            \textsc{DCL} obtains high $R^2$ for various choices of dynamics (Fig.~\ref{fig:slds}b) and additionally identifies the correct mode sequence (Fig.~\ref{fig:slds}c) for all noise levels and variants of the underlying dynamics. As we increase the rotation angle used to generate the matrices, the gap between baseline and our model increases substantially.
    
        \paragraph{Non-linear dynamics.}
            Figure~\ref{fig:lorenz} depicts the Lorenz system as an example of a non-linear dynamical system for different choices of algorithms. The ground truth dynamics vary in the ratio between $dt$/$\sigma$ and we show the full range in panels b/c.
            When the noise dominates the dynamics (panel a), the baseline is able to estimate also the nonlinear dynamics accurately, with 99.7\%.
            However, as we move to lower noise cases (panel b), performance reduces to 41.0\%.
            Our switching dynamics model is able to estimate the system with high $R^2$ in both cases (94.14\% and 94.08\%).
            However, note that in this non-linear case, we are primarily succeeding at estimating the latent space, the estimated dynamics model did not meaningfully outperform an identity model (Appendix~\ref{app:dynr2}).

        \paragraph{Extensions to other distributions $p_\noise$.}
            While Euclidean geometry is most relevant for dynamical systems in practice, and hence the focus of our theoretical and empirical investigation, contrastive learning commonly operates on the hypersphere in other contexts. We provide additional results for the case of a von Mises-Fisher (vMF) distribution for $p_\noise$ and dot-product similarity for $\phi$ in Appendix~\ref{appendix:vmf}.

    \subsection{Ablation studies}
    
        For practitioners leveraging contrastive learning for statistical analysis, it is important to know the trade-offs in empirical performance in relation to various parameters. In real-world experiments, the most important factors are the size of the dataset, the trial-structure of the dataset, the latent dimensionality we can expect to recover, and the degree of non-linearity between latents and observables. We consider these factors of influence: As a reference, we use the SLDS system with a 6D latent space, 1M samples (1k trials $\times$ 1k samples), $L=4$  mixing layers, 10 degrees for the rotation matrices, 65,536 negative samples per batch; batch size 2,048, and learning rate $10^{-3}$.
    
    \paragraph{Impact of dataset size (Fig.~\ref{fig:variations}a).}
        We keep the number of trials fixed to 1k.
        As we vary the sample size per trial, $R^2$ degrades for smaller dataset, and for the given setting we need at least 100 points per trial to attain identifiability empirically.
        We outperform the baseline model in all cases.
    
    \paragraph{Impact of trials (Fig.~\ref{fig:variations}b).}
        We next simulate a fixed number of 1M datapoints, which we split into trials of varying length. We consider 1k, 10k, 100k, and 1M as trial lengths. Performance is stable for the different settings, even for cases with small trial length (and less observed switching points).
        \textsc{DCL} consistently outperforms the baseline algorithm and attains stable performance close to the theoretical maximum given by the ground-truth dynamics.
         \begin{figure}[t]
            \centering
            \includegraphics[width=\linewidth]{./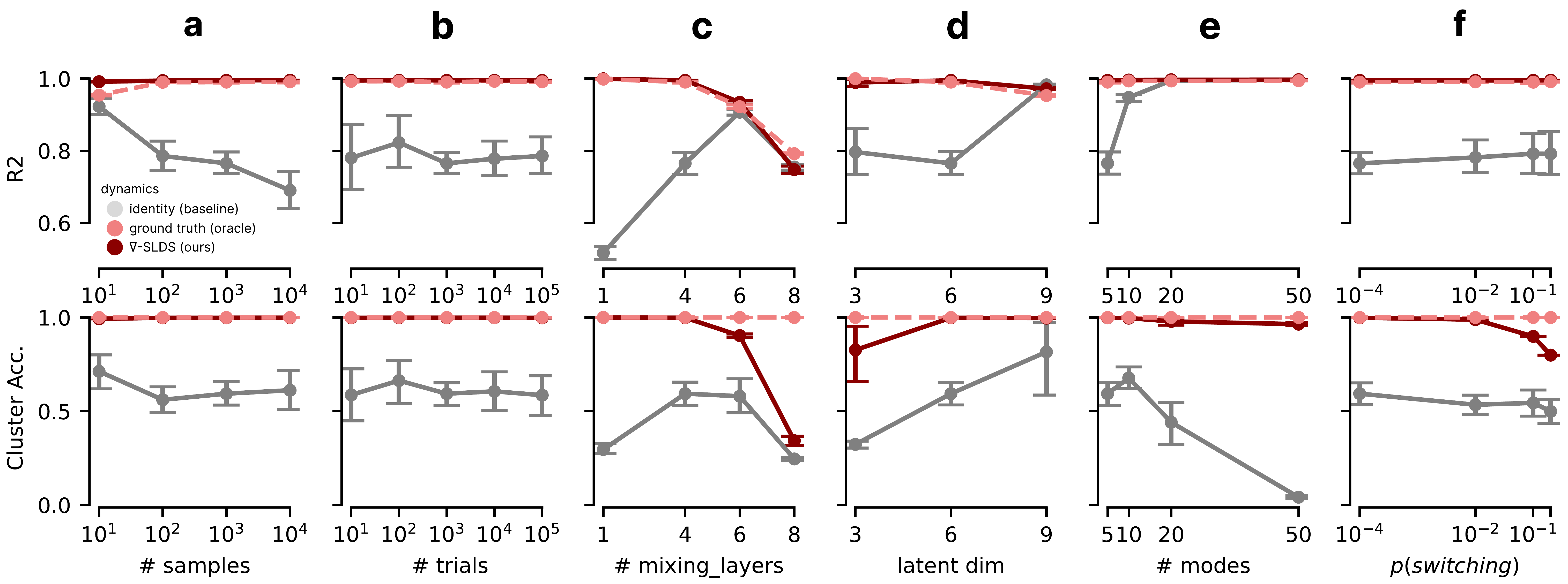}
            \caption{%
                Variations and ablations for the SLDS.
                We compare the $\nabla$-SLDS model to the ground-truth switching dynamics (oracle) and a standard CL model without dynamics (baseline).
                All variations are with respect to the setting with 1M time steps (1k trials\,$\times$\,1k samples), $L=4$ mixing layers, $d=6$ latent dimensionality, 5 modes, and $p=0.0001$ switching probability. We study the impact of the dataset size in terms of (a) samples per trial, (b) the number of trials, the impact of nonlinearity of the observations in terms of (c) number of mixing layers, the impact of complexity of the latent dynamics in terms of (d) latent dimensionality, (e) number of modes to switch in between and (f) the switching frequency paramameterized via the switching probability.
            }
            \label{fig:variations}
            \vspace{-1em}
        \end{figure}

        \begin{wrapfigure}{r}{0.3\textwidth}
            \vspace{-1em}
            \centering
            \includegraphics[width=0.28\textwidth]{./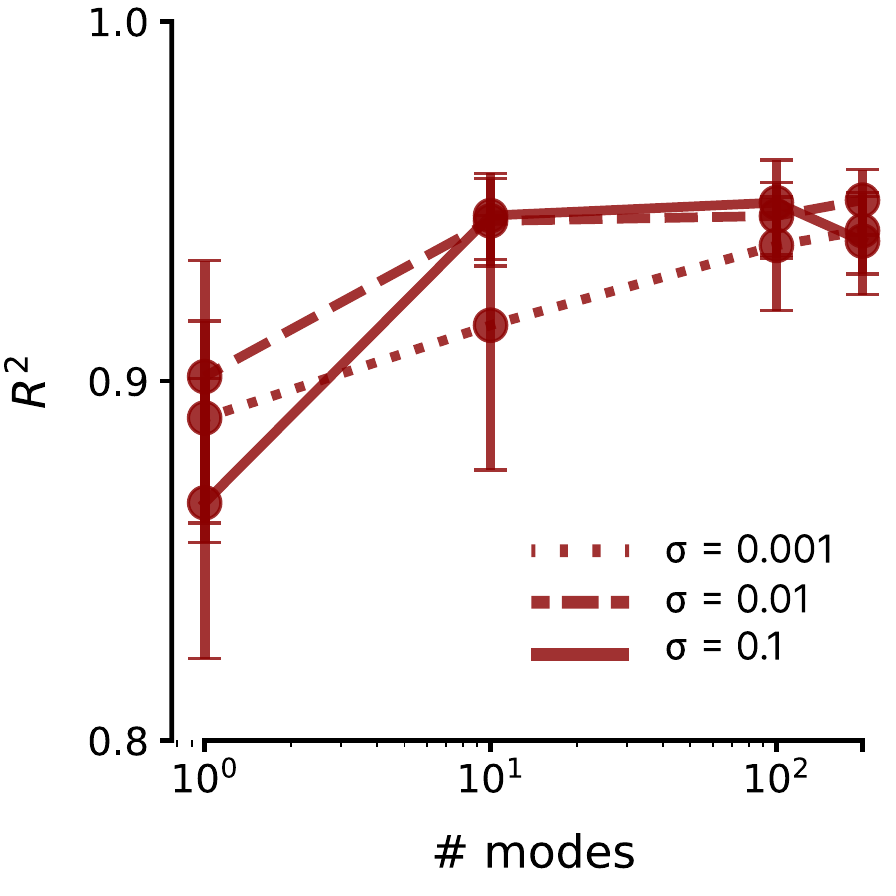}
            \caption{Impact of modes for non-linear dynamics in the Lorenz system for different system noise levels $\sigma$, averaged over all $dt$.} 
            \label{fig:lorenz-ablation}
            \vspace{-3em}
        \end{wrapfigure}
        
    \paragraph{Impact of non-linear mixing (Fig.~\ref{fig:variations}c).}
        All main experiments have been conducted with $L = 4$ mixing layers in the mixing function $\vg$. 
        Performance of $\textsc{DCL}$ stays at the theoretical maximum as we increase the number of mixing layers. As we move beyond four layers, both oracle performance in $R^2$ and our model declines, hinting that either (1) more data or (2) a larger model is required to recover the dynamics successfully in these cases. 
            
    \paragraph{Impact of dimensionality (Fig.~\ref{fig:variations}d).}
        Increasing latent dimensionality does not meaningfully impact performance of our model. We found that for higher dimensions, it is crucial to use a large number of negative examples (65k) for successful training. 
    
    \paragraph{Number of modes for switching linear dynamics fitting (Fig.~\ref{fig:variations}e).}
        Increasing the number of modes in the dataset leads to more successful fitting of the $R^2$ for the baseline model, but to a decline in accuracy. This might be due to the increased variance: While this helps the model to identify the latent space (dynamics appear more like noise), it still fails to identify the underlying dynamics model, unlike \textsc{DCL} which attains high $R^2$ and cluster accuracy throughout.
    
    \paragraph{Robustness to changes in switching probability (Fig.~\ref{fig:variations}f).}
        Finally, we vary the switching probability. Higher switching probability causes shorter modes, which are harder to fit by the $\nabla$-SLDS dynamics model. Our model obtains high empirical identifiability throughout the experiment, but the accuracy metric begins to decline when $p=0.1$ and $p=0.2$.
        
    \paragraph{Number of modes for non-linear dynamics fitting (Fig.~\ref{fig:lorenz-ablation}).}
        We study the effect of increasing the number of matrices in the parameter bank $\tW$ in the $\nabla$-SLDS model. The figure depicts the impact of increasing the number of modes for \textsc{DCL} on the non-linear Lorenz dataset. We observe that increasing modes to 200 improves performance, but eventually converges to a stable maximum for all noise levels.

\section{Discussion}
    The \textsc{DCL} framework is versatile and allows to study the performance of contrastive learning in conjunction with different dynamics models. By exploring various special cases (identity, linear, switching linear), our study categorizes different forms of contrastive learning and makes predictions about their behavior in practice.
    In comparison to contrastive predictive coding (CPC; \citealp{oord2018representation}) or wav2vec \citep{schneider2019wav2vec}, \textsc{DCL} generalizes the concept of training contrastive learning models with (explicit) dynamics models. CPC uses an RNN encoder followed by linear projection, while wav2vec leverages CNNs dynamics models and affine projections. Theorem~\ref{prop:1} applies to both these models, and offers an explanation for their successful empirical performance. 

Nonlinear ICA methods, such as TCL \citep{hyvarinen2016unsupervised} and PCL \citep{hyvarinen2017nonlinear} provide identifiability of the latent variables leveraging temporal structure of the data. Compared to \textsc{DCL}, they do not explicitly model dynamics and assume either stationarity or non-stationarity of the time series \citep{hyvarinen2023NonlinearIndependent}, whereas \textsc{DCL} assumes bijective latent dynamics, and focuses on explicit dynamics modeling beyond solving the demixing problem. %

    For applications in scientific data analysis, CEBRA \citep{schneider2023learnable} uses supervised or self-supervised contrastive learning, either with symmetric encoders or asymmetric encoder functions. While our results show that such an algorithm is able to identify dynamics for a sufficient amount of system noise, adding dynamics models is required as the system dynamics dominate. Hence, the \textsc{DCL} approach with LDS or $\nabla$-SLDS dynamics generalises the self-supervised mode of CEBRA and makes it applicable for a broader class of problems.

    Finally, there is a connection to the joint embedding predictive architecture (JEPA; \citealp{lecun2022path, assran2023self}). The architecture setup of \textsc{DCL} can be regarded as a special case of JEPA, but with symmetric encoders to leverage distillation of the system dynamics into the predictor (the dynamics model). In contrast to JEPA, the use of symmetric encoders requires a contrastive loss for avoiding collapse and, more importantly, serves as the foundation for our theoretical result.

    A limitation of the present study is its main focus on simulated data which clearly corroborates our theory but does not yet demonstrate real-world applicability. However, our simulated data bears the signatures of real-world datasets (multi-trial structures, varying degrees of dimensionality, number of modes, and different forms of dynamics). A challenge is the availability of real-world benchmark datasets for dynamics identification. We believe that rigorous evaluation of different estimation methods on such datasets will continue to show the promise of contrastive learning for dynamics identification. Integrating recent benchmarks like DynaDojo \citep{bhamidipaty2023dynadojo} or datasets from \citet{chen2021DiscoveringState} with realistic mixing functions ($\vg$) offers a promising direction for evaluating latent dynamics models. As a demonstration of real-world applicability, we benchmarked \textsc{DCL} on a neural recordings dataset in Appendix~\ref{app:allen_dataset}.

\section{Conclusion}
    We proposed a first identifiable, end-to-end, non-generative inference algorithm for latent switching dynamics along with an empirically successful parameterization of non-linear dynamics. Our results point towards the empirical effectiveness of contrastive learning across time-series, and back these empirical successes by theory. We show empirical identifiability with limited data for linear, switching linear and non-linear dynamics. Our results add to the understanding of SSL's empirical success, will guide the design of future contrastive learning algorithms and most importantly, make SSL amenable for computational statistics and data analysis.
    
\subsection*{Reproducibility statement}
    \paragraph{Code.} Code is available at \url{https://github.com/dynamical-inference/dcl} under an Apache 2.0 license. Experimental and implementation details for the main text are given in section ~\ref{sec:experiments} and for each experiment of the Appendix within the respective chapter.
    
    \paragraph{Theory.} Our theoretical claims are backed by a complete proof attached in Appendix~\ref{proof:main-result}. Assumptions are outlined in the main text (Section~\ref{sec:theory}) and again in more detail in Appendix~\ref{proof:main-result}.
    
    \paragraph{Datasets.} We evaluate our experiments on a variety of synthetic datasets. The datasets comprise different dynamical systems, from linear to nonlinear.
    
    \paragraph{Compute.} Moderate compute resources are required to reproduce this paper. As stated in section  ~\ref{sec:experiments}, we used around 120 days of GPU compute on a A100 to produce the results presented in the paper. We provide a more detailed breakdown in Appendix~\ref{app:compute_breakdown}.
    
\subsection*{Author Contributions}
    RGL and TS: Methodology, Software, Investigation, Writing--Editing.
    StS: Conceptualization, Methodology, Formal Analysis, Writing--Original Draft and Writing--Editing.

\subsection*{Acknowledgments}
    We thank Luisa Eck and Stephen Jiang for discussions on the theory, and Lilly May for input on paper figures. 
    We thank the five anonymous reviewers at ICLR for their valuable and constructive comments on our manuscript.
    This work was supported by the Helmholtz Association's Initiative and Networking Fund on the HAICORE@KIT and HAICORE@FZJ partitions.

\bibliography{refs}
\bibliographystyle{bibstyle}

\clearpage
\appendix

\part{Supplementary Material}
\parttoc

\clearpage
\section{Proof of the main result}\label{proof:main-result}
    We re-state Theorem~\ref{prop:1} from the main paper, and provide a full proof below:
    \noindent
    \begin{restatedtheorem}[Contrastive estimation of non-linear dynamics]
        Assume that 
        \begin{itemize}[noitemsep, topsep=0pt]
            \item (A1) A time-series dataset $\{\vy_t\}_{t=1}^{T}$ is generated according to the ground-truth dynamical system in Eq.~\ref{def:data-generation-rigorous} with a bijective dynamics model $\vf$ and an injective mixing function $\vg$.
            \item (A2) The system noise follows an iid normal distribution, $p(\noise_t) = \mathcal{N}(\noise_t | 0, \mSigma_\noise)$.
            \item (A3) The model $\psi$ is composed of an encoder $\vh$, a dynamics model $\hvf$, a correction term $\alpha$, and the similarity metric $\phi(\vu, \vv) = -\|\vu - \vv\|^2$ and attains the global minimizer of Eq.~\ref{eq:minimizer}. 
        \end{itemize}
        Then, in the limit of $T \rightarrow \infty$ for any point $\vx$ in the support of the data marginal distribution:
        \begin{itemize}[noitemsep, topsep=0pt]
            \item[(a)] The composition of mixing and de-mixing $\vh(\vg(\vx)) = \mL \vx + \vb$ is a bijective affine transform, and $\mL = \mQ \mSigma_\epsilon^{-1/2}$ with unknown orthogonal transform $\mQ \in \R^{d\times d}$ and offset $\vb \in \R^d$.
            \item[(b)] The estimated dynamics $\hat \vf$ are bijective and identify the true dynamics $\vf$ up to the relation $\hvf(\vx) = \mL \vf(\mL^{-1}(\vx-\vb)) + \vb$.
        \end{itemize}
    \end{restatedtheorem}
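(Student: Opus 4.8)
The plan is to adapt the ``contrastive learning recovers the log--density ratio'' argument of \citet{Zimmermann2021ContrastiveLI} to the conditional (next-step) setting, and then to run a separation-of-variables argument in latent space. \emph{Step 1 --- the optimal critic.} Read (A3) as stating that the model attains the infimum of $\mathcal{L}$ over all measurable critics. In the limit $T\to\infty$ (with enough negatives that the InfoNCE objective is tight, the negatives being drawn from the data marginal $p(\vy')$), the standard characterization of the InfoNCE minimizer \citep{oord2018representation,Zimmermann2021ContrastiveLI} gives, for $\vy,\vy'$ in the support,
\[
  \psi(\vy,\vy') \;=\; \log p(\vy_{t+1}{=}\vy'\mid \vy_t{=}\vy) \;-\; \log p(\vy') \;+\; c(\vy),
\]
with $c$ a function of the reference alone.

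\emph{Step 2 --- pull back to latent space.} Because $\vg$ is injective, $\vy_t$ determines $\vx_t=\vg^{-1}(\vy_t)$, and $p(\vy'\mid\vy)$ is the push-forward of $\mathcal N(\,\cdot\,;\vf(\vx),\mSigma_\noise)$ through $\vg$; the $\vg$-Jacobian appears in both the conditional and the marginal and cancels in the ratio. Writing $\ve:=\vh\circ\vg$, using $\phi(\vu,\vv)=-\|\vu-\vv\|^2$, and using that in the infinite-data limit the KDE term estimates the density of $\ve(\vx')$, so that $\alpha(\vy')=\log q(\vx')+r(\vx')$ with $r$ a function of $\vx'$ only, we obtain the key identity
\[
  -\|\hvf(\ve(\vx))-\ve(\vx')\|^2 \;=\; -\tfrac12\big(\vx'-\vf(\vx)\big)^{\!\top}\mSigma_\noise^{-1}\big(\vx'-\vf(\vx)\big) \;+\; r(\vx')\;+\;c'(\vx),
\]
valid for all $\vx$ and all $\vx'$ in the support, which is full-dimensional by Gaussianity of the noise.

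\emph{Step 3 --- affineness, then the two relations.} Differentiating the identity once in $\vx'$ and once in $\vx$ kills every pure-$\vx$ and pure-$\vx'$ term (including $r$, $c'$, and the marginal) and leaves $2\,\mJ_\ve(\vx')^{\!\top}\mJ_{\hvf\circ\ve}(\vx)=\mSigma_\noise^{-1}\mJ_\vf(\vx)$. At a point $\vx_0$ where $\mJ_\vf$ is invertible the right-hand side is invertible, forcing $\mJ_\ve(\vx')$ to be invertible for every $\vx'$; solving for it shows $\mJ_\ve$ is a \emph{constant} matrix, call it $\mL$, so $\vh(\vg(\vx))=\mL\vx+\vb$ is a bijective affine map --- the first claim of (a). Substituting this back into the Step 2 identity, $r$ becomes a genuine constant, the $\log q(\vx')$ terms cancel, and both sides become quadratic polynomials in $\vx'$ which must coincide on a full-dimensional set: matching the $\vx'$-quadratic parts forces $\mL^{\!\top}\mL$ to be a positive multiple of $\mSigma_\noise^{-1}$, equivalently $\mL=\mQ\mSigma_\noise^{-1/2}$ with $\mQ$ orthogonal (the scalar being fixed by the normalization relating $\phi$ to the Gaussian log-density), completing (a); and matching the $\vx'$-linear parts, together with this relation, collapses to $\hvf(\mL\vx+\vb)=\mL\vf(\vx)+\vb$. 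Replacing $\vx$ by $\mL^{-1}(\vx-\vb)$ yields $\hvf(\vx)=\mL\vf(\mL^{-1}(\vx-\vb))+\vb$, and $\hvf$ is bijective as a composition of affine bijections with the bijection $\vf$; the $\vx'$-constant parts merely determine $c'$.

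\emph{Main obstacle.} The delicate part is Step 2: justifying the exact form of the InfoNCE minimizer, handling densities for an injective $\vg:\R^d\to\R^D$ whose image is a measure-zero submanifold (so that the $\vg$-Jacobian really cancels in the ratio), and --- most importantly --- arguing that the KDE correction collapses to $\log q(\vx')$ plus a \emph{constant}, which is only legitimate once the encoder is known to be affine. Hence the proof must be sequenced carefully: differentiate first to obtain affineness, then revisit the correction term before matching coefficients. Everything downstream is routine multivariable calculus and matching of polynomial coefficients.
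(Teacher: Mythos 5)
Your proposal is correct and follows the paper's overall strategy for Steps 1--2 (characterize the InfoNCE minimizer as a log-density ratio, pull back to latent space, take the mixed derivative in $\vx$ and $\vx'$ to force $\mJ_{\vh \circ \vg}$ to be constant, hence affine with invertible $\mL$), but it diverges from the paper in the final stage, in a way that is arguably cleaner. The paper first extracts $\mL^\top \mL = \mLambda$ by differentiating twice in $\vx'$, and then proves the dynamics relation through a separate uniqueness argument: an ansatz $\vf(\vx) = \mA_1 \hvf(\mL\vx+\vb) + \vd_1 + \residual(\vx)$ with an unknown residual, whose Jacobian is pinned down via Eq.~\ref{eq:mixing-function-constraint}, yielding $\mA = \mL^{-1}$ and then $\vd = -\mL^{-1}\vb$ from an independence-of-variables argument on $c'$. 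You instead observe that once the encoder is affine, both sides of the key identity are quadratic polynomials in $\vx'$ on a full-dimensional set, so matching the quadratic coefficients gives $\mL^\top\mL \propto \mSigma_\noise^{-1}$ and matching the linear coefficients immediately collapses to $\hvf(\mL\vx+\vb) = \mL\vf(\vx)+\vb$; this replaces the paper's ansatz-and-residual machinery with routine coefficient matching and buys a shorter, more transparent uniqueness argument. You are also more careful than the paper about the correction term: the paper simply inserts $\alpha = \log q$, whereas you let the KDE estimate the density of the \emph{embedded} points, carry the resulting extra $\vx'$-dependent term through the differentiation step, and only collapse it to a constant after affineness is established --- a sequencing point the paper glosses over. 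One small caveat: with the exact Gaussian log-density the quadratic match gives $\mL^\top\mL = \tfrac12\mSigma_\noise^{-1}$, so your parenthetical claim that the scalar is ``fixed'' so that $\mL = \mQ\mSigma_\noise^{-1/2}$ exactly is a convention choice (the paper has the same issue, as it silently absorbs the $\tfrac12$ into its ``concentration'' $\mLambda = \mSigma_\noise^{-1}$); this affects neither bijectivity nor part (b), since the linear-coefficient match only uses $\mSigma_\noise^{-1} \propto \mL^\top\mL$.
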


    \begin{proof}
    Our proof proceeds in three steps:
    First, we leverage existing theory \citep{wang2020understanding,Zimmermann2021ContrastiveLI} to arrive at the minimizer of the contrastive loss, and relate the limited sample loss function to the asymptotic case.
    Second, we derive the statement about achieving successful demixing in Theorem~\ref{prop:1}(a).
    Finally, we derive the statement in Theorem~\ref{prop:1}(a) about structural identifiability of the dynamics model.
    
    \paragraph{Step 1: Minimizer of the InfoNCE loss.}
        By the assumption $\noise_t$ is normally distributed, we obtain the positive sample conditional distribution
        \begin{align}\label{def:positive-distribution}
            p(\vx_{t+1} | \vx_{t})
                &= \mathcal{N}(\vx_{t+1} | \vf(\vx_{t}), \mSigma_\noise).\\
            \intertext{%
                The negative sample distribution $q(\vx_t)$ is obtained by sampling $t$ uniformly from all time steps and can hence be written as a Gaussian mixture along the dynamics imposed by $\vf$,
            }
            q(\vx) 
                &= \frac{1}{T} \sum_{t = 1}^T p_\noise(\vx - \vx_t) \\
                &= \frac{1}{T} \sum_{t = 1}^T
                        \mathcal{N}(\vx - \vx_t | \vf(\vx_{t-1}), \mSigma_\noise).
        \end{align}
    
        We use these definitions of $p$ and $q$ to study the asymptotic case of our loss function. For $T \rightarrow \infty$, due to \citet{wang2020understanding} we can rewrite the limit of our loss function (Eq.~\ref{eq:minimizer}) as
        \begin{equation}
        \begin{aligned}
            \mathcal{L}[\psi] =&\lim_{T \rightarrow \infty}
                \mathbb{E}_{t,N}[-\log p_\psi(\vy_{t+1}| \vy_t, N)] 
                - \log T \\
            =&\int q(\vy)
                \left[
                    - \int p(\vy' | \vy) \psi(\vy, \vy') d\vy'  
                    + \log \int q(\vy') \exp [\psi(\vy, \vy')] d\vy'
                \right].
        \end{aligned}
        \end{equation}
        It was shown (Proposition~1, \citealp{schneider2023learnable}) that this loss function is convex in $\psi$ with the unique minimizer 
        \begin{align}
            \psi(\vg(\vx), \vg(\vx')) &= \log \frac{p(\vx' | \vx)}{q(\vx')} + c(\vx), \\ 
            \intertext{where $c: \R^d \mapsto \R$ is an arbitrary scalar-valued function. Note that we also expressed $\vy = \vg(\vx), \vy' = \vg(\vx')$ to continue the proof in terms of the relation between original and estimated latents. We insert the definition of the model on the left hand side. Let us also denote $\vh \circ \vg =: \vr$, and the definition of the ground-truth generating process on the right hand side to obtain}
            \phi(\hat \vf(\vr(\vx)), \vr(\vx')) - \alpha(\vx') &= \log p(\vx'|\vf(\vx)) - \log q(\vx') + c(\vx).\\
            \intertext{Inserting the potential\footnotemark{} as $\alpha(\vx) = \log q(\vx)$ simplifies the equation to}
            \phi(\hat \vf(\vr(\vx)), \vr(\vx')) &=\log p(\vx'|\vf(\vx)) + c(\vx).\\
            \intertext{From here onwards, we will use that $\phi$ is the negative squared Euclidean norm (A.3) and correspondingly, the positive conditional is a normal distribution with concentration $\mLambda = \mSigma_u^{-1}$ (A.2),}
            -\|\hat \vf(\vr(\vx)) - \vr(\vx')\|_2^2
                &= - (\vf(\vx) - \vx')^\top \mLambda (\vf(\vx) - \vx') + c'(\vx), \label{eq:minimizer-transformed}
        \end{align}
        where we pulled the normalization constant of $p$ into the function $c'$ for brevity.\footnotetext{This is feasible in practice by parameterizing $\alpha(\vx)$ as a kernel density estimate, but empirically often not required. See Appendix~\ref{appendix:kernel} for additional technical details.}
    
    \paragraph{Step 2: Properties of the feature encoder.}
        Starting from the last equation, we compute the derivative with respect to $\vx$ and $\vx'$ on both sides and obtain
        \begin{align}
            \mJ^\top_\vr(\vx') \mJ_\hvf(\vr(\vx)) \mJ_\vr(\vx) 
                &= \mLambda \mJ_\vf(\vx).
            \intertext{Because this equation holds for any $\vx' \in \supp q$ independently of $\vx$, we can conclude that the Jacobian matrix of $\vr$ needs to be constant. From there it follows that $\vr$ is affine. Let us write $\vr(\vx) = \mL \vx + \vb$. Then, the Jacobian matrix is $\mJ_\vr = \mL$. Inserting this yields}
            \mL^\top \mJ_\hvf(\mL \vx + \vb) \mL
                &= \mLambda \mJ_\vf(\vx). \label{eq:mixing-function-constraint}
        \end{align} 
         We next establish that $\mL$ has full rank: because the dynamics function $\vf$ is bijective by assumption, $\mJ_\vf(\vx)$ has full rank $d$. $\mLambda$ has full rank by assumption about the distribution $p_\noise(\noise)$. All matrices on the LHS are square and need to have full rank as well for any point $\vx$. Hence, we can conclude that $\mL$ has full rank, and likewise $\mJ_{\hvf}$. From there, we conclude that $\vr$ and $\hvf$ are bijective.
        
         Next, we derive additional constraints on the matrix $\mL$.
         We insert the solution for $\vr$ obtained so far in Eq.~\ref{eq:minimizer-transformed},
         \begin{align}
            - \|\hat \vf(\mL \vx + \vb) - \mL \vx' - \vb\|^2
                &= - (\vf(\vx) - \vx')^\top \mLambda (\vf(\vx) - \vx') + c'(\vx) \\
            \intertext{and take the derivative twice with respect to $\vx'$, to obtain}
            \mL^\top \mL
                &= \mLambda \quad \Leftrightarrow \quad \mL^\top = \mLambda \mL^{-1}.
         \end{align}
         Without loss of generality, we introduce $\mQ \in \R^{d\times d}$ to write $\mL$ in terms of $\mLambda$ as $\mL = \mQ \mLambda^{1/2}$. Inserting into the previous equation lets us conclude
         \begin{align}
             \mL^\top \mL &= \mLambda^{1/2} \mQ^\top \mQ \mLambda^{1/2} = \mLambda\\
                              \mQ^\top \mQ  &= \mLambda^{-1/2} \mLambda  \mLambda^{-1/2} = \mI,
         \end{align}
         from which follows that $\mQ$ is an orthogonal matrix. Hence, $\mL$ is a composition of an orthogonal transform and $\mSigma_\noise^{-1/2}$, concluding the first part of the proof for statement (a).

    \paragraph{Step 3: Dynamics.}
        To derive part (b), we start at the condition Eq.~\ref{eq:minimizer-transformed} again to determine the value of $c'(\vx)$. We can consider two special cases:
        \begin{align}
            \vf(\vx) = \vx': \quad 
                   c'(\vx) &= -\|\hat \vf(\vr(\vx)) - \vr(\vx')\|^2 \leq 0, \\
            \hat \vf(\vr(\vx)) = \vr(\vx'): \quad
                   c'(\vx) &= (\vf(\vx) - \vx')^\top \mLambda (\vf(\vx) - \vx') \geq 0,
        \end{align}
        where we use that the concentration matrix $\mLambda$ of a Normal distribution is positive semi-definite.
        When combining both conditions for points where $\vf(\vx) = \vx'$ and $\hat \vf(\vr(\vx)) = \vr(\vx')$ the only admissible solution is $c'(\vx) = 0$ for points with $\hat \vf(\vr(\vx)) = \vr(\vf(\vx))$, i.e. $\hat \vf(\vx) = \vr(\vf(\vr^{-1}(\vx)))$, hinting at the final solution.
        However, we have not shown yet that this solution is unique.

        \newcommand{\residual}{\vv}
        
        To show uniqueness, without loss of generality, we use the ansatz (with a residual $\residual$)
        \begin{align}
            \vf(\vx) &= \mA_1 \hvf(\mL \vx + \vb) + \vd_1 + \residual(\vx),
                \label{eq:ansatz-v1}
            \intertext{and computing the derivative with respect to $\vx$ yields}
            \mJ_\vf(\vx) &= \mA_1 \mJ_\hvf(\mL \vx + \vb) \mL + \mJ_\residual(\vx).
                \label{eq:ansatz-v1-first-step}
            \intertext{We insert this into Eq.~\ref{eq:mixing-function-constraint} and obtain}
            \mL^\top \mJ_\hvf(\mL \vx + \vb) \mL &= \mLambda \mA_1 \mJ_\hvf(\mL \vx + \vb) \mL + \mLambda \mJ_\residual(\vx) \\
            (\mL^\top - \mLambda \mA_1) \mJ_\hvf(\mL \vx + \vb) \mL &= \mLambda \mJ_\residual(\vx)  \\
            (\mL^\top - \mLambda \mA_1)  
                &= \mLambda \mJ_\residual(\vx) \mL^{-1} \mJ^{-1}_\hvf(\mL \vx + \vb). \label{eq:ansatz-v1-last-step} \\
            \intertext{The left hand side is a constant, hence the same needs to hold true for the right hand side. Without loss of generality, let us introduce an arbitrary matrix $\mA_2$ we set as this constant,}
            \mJ_\residual(\vx) \mL^{-1} \mJ^{-1}_\hvf(\mL \vx + \vb) &= \mA_2 \\
            \mJ_\residual(\vx) &= \mA_2 \mJ_\hvf(\mL \vx + \vb) \mL,\\
            \intertext{which only admits the solution}
            \residual(\vx) &= \mA_2 \hvf(\mL \vx + \vb) + \vd_2, \\ 
            \intertext{where we introduced an additional integration constant $\vd_2$. Inserting this into the ansatz in Eq.~\ref{eq:ansatz-v1} gives}
            \vf(\vx) &= \mA_1 \hvf(\mL \vx + \vb) + \vd_1 + \residual(\vx),\\
            \vf(\vx) &= (\mA_1+\mA_2) \hvf(\mL \vx + \vb) + (\vd_1+\vd_2).
            \intertext{Using the shorthand $\mA=\mA_1+\mA_2$, $\vd=\vd_1+\vd_2$ we can repeat the steps in Eqs.~\ref{eq:ansatz-v1-first-step}--\ref{eq:ansatz-v1-last-step} to arrive at the condition}
            (\mL^\top - \mLambda \mA) \mJ_\hvf(\mL \vx + \vb) \mL
                &= 0.\\
            \intertext{Since all matrices have full rank, the only valid solution is $\mA = \mLambda^{-1} \mL^\top = \mL^{-1}$. Inserting back into the ansatz yields the refined solution}
            \vf(\vx) &= \mL^{-1} \hvf(\mL \vx + \vb) + \vd, \label{eq:ansatz-v2}\\
            \intertext{and for brevity, we let $\vxi = \hvf(\vr(\vx)) = \hvf(\mL \vx + \vb)$:}
            \vf(\vx)  &= \mL^{-1} \vxi + \vd.
        \end{align}
        We then insert the current solution into Eq.~\ref{eq:minimizer-transformed} and input $\vr$ which gives
        \begin{align}
            \| \vxi - \mL\vx' - \vb \|^2 
                &=  (\mL^{-1} \vxi + \vd - \vx')^\top \mLambda (\mL^{-1} \vxi + \vd - \vx') + c'(\vx) \\
                &=  (\mL^{-1} \vxi + \vd - \vx')^\top \mL^\top \mL (\mL^{-1} \vxi + \vd - \vx')  + c'(\vx) \\
                &=  \| \vxi + \mL \vd - \mL \vx' \|^2  + c'(\vx)\\
             c'(\vx)   &=  \| \vxi - \mL\vx' - \vb \|^2 - \| \vxi - \mL \vx' + \mL \vd\|^2.  \\
             \intertext{Let us denote $\vv = \vxi - \mL\vx'$ and note that $\vv$ and $\vx$ remain independent variables. We then get}
            c'(\vx)   &=  \| \vv - \vb \|^2 - \| \vv + \mL \vd\|^2  \\
               &=  -2\vv^\top (\vb+\mL\vd) + \| \vb \|^2 - \| \mL \vd \|^2. \\
            \intertext{Because $\vv$ and $\vx$ vary independently and the equation is true for any pair of these points, both sides of the equation need to be independent of their respective variables. This is true only if $\vb = -\mL\vd$. Hence, it follows that}
            c'(\vx) &= 0 \quad \text{and} \quad \vd = -\mL^{-1} \vb.
        \end{align}
        Inserting $\vd$ into Eq.~\ref{eq:ansatz-v2} gives the final solution,
        \begin{align}
            \vf(\vx) &= \mL^{-1} \hvf(\mL \vx + \vb) - \mL^{-1} \vb.\\
            \intertext{Solving for $\hvf$ gives us}
            \hvf(\mL \vx + \vb) &= \mL \vf(\vx) + \vb\\
            \hvf(\vx) &= \mL \vf(\mL^{-1}(\vx-\vb)) + \vb = (\vr \circ \vf \circ \vr^{-1})(\vx),
        \end{align}
        which concludes the proof.
        \end{proof}

\clearpage
\section{Kernel density estimate correction} \label{appendix:kernel}

Theorem~\ref{prop:1} requires to include a ``potential function'' $\alpha$ into our model. In this section, we discuss how this function can be approximated by a kernel density estimate (KDE) in practice. The KDE intuitively corrects for the case of non-uniform marginal distributions.
Correcting with $\alpha$ overcomes the limitation of requiring a uniform marginal distribution discussed before~\citep{Zimmermann2021ContrastiveLI}. While other solutions have been discussed, such as training a separate MLP \citep{matthes2023towards}, the KDE solution discussed below is conceptually simpler and non-parametric.

\begin{figure}[b]
    \centering
    \includegraphics[width=0.95\linewidth]{./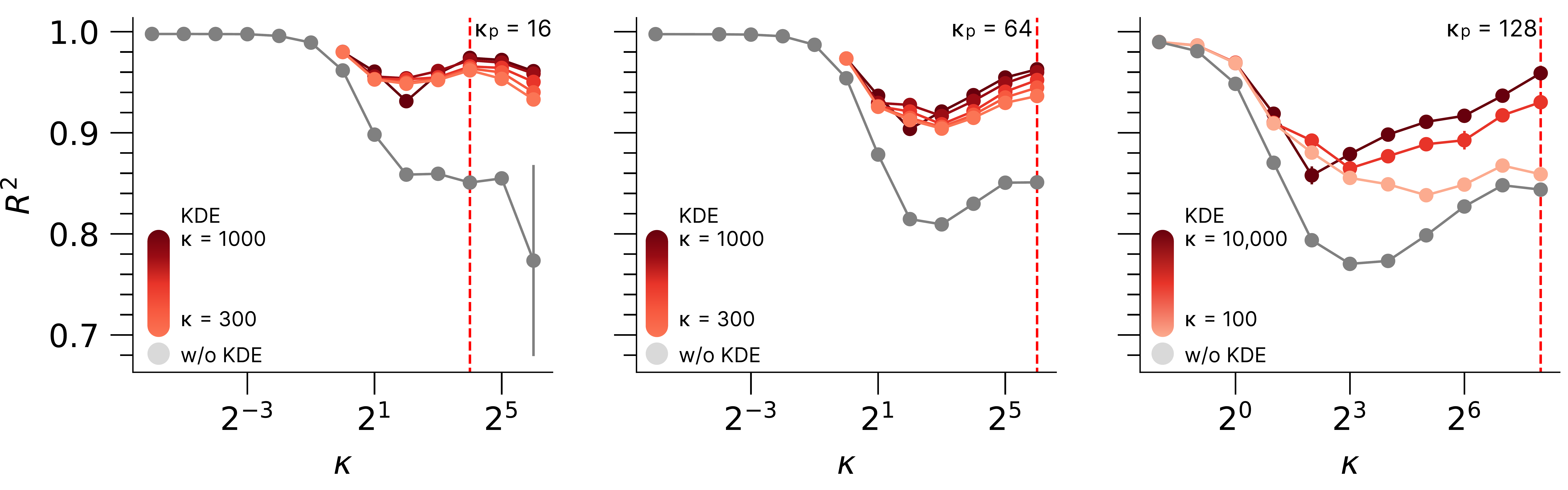}
    \caption{Introducing KDE into the loss allows to compensate for non-uniform marginal distribution. We show performance in terms of $R^2$ across datasets with increasingly non-uniform marginal. We replicate the data-generating process and experimental setup performed by \citet[Figure 2]{Zimmermann2021ContrastiveLI}.}
    \label{fig:kde}
\end{figure}

For the models considered in the main paper, we considered representation learning in Euclidean space, while Appendix~\ref{appendix:vmf} contains some additional experiments for the very common case of training embeddings on the hypersphere. For both cases, we can parameterize appropriate KDEs.

For the Euclidean case, we use the KDE based on the squared Euclidean norm,
\begin{equation}
    \hat q(\vx) = \frac{1}{\eps M} \sum_{i=1}^M \exp\left(-\frac{\|\vx - \vx_i\|^2}{\eps}\right), 
    \quad
    \vx_i \sim q(\vx).
\end{equation}
We note that in the limit $\eps \rightarrow 0$, $M \rightarrow \infty$, this estimate converges to the correct distribution, $\hat q(\vx) \rightarrow q(\vx)$. This is also the case used in Theorem~\ref{prop:1}.
However, this estimate depends on the ground truth latents $\vx_i$, which are not accessible during training. Hence, we need to find an expression that depends on the observable data. We leverage the feature encoder $\vh$ to express the estimator as
\begin{equation}
    \hat q_\vh(\vy) = \frac{1}{\eps M} \sum_{i=1}^M \exp\left(-\frac{\|\vh(\vy) - \vh(\vy_i)\|^2}{\eps}\right), 
    \quad
    \vy_i \sim q(\vy).
\end{equation}
We can express this estimator in terms of the final solution, $\vr(\vx) = \vh(\vg(\vx)) = \mQ \mSigma_u^{-1/2} \vx + \vb$ in the theorem. If we express the solution in terms of the ground truth latents again, the orthogonal matrix $\mQ$ vanishes and we obtain
\begin{equation}
    \hat q_\vh(\vx)
        = \frac{1}{\eps M} \sum_{i=1}^M \exp\left(-\frac{\|\mSigma^{-1/2}_u( \vx - \vx_i) \|^2}{\eps}\right), 
    \quad
    \vy_i \sim q(\vy).
\end{equation}
This corresponds to a KDE using a Mahalanobis distance with covariance matrix $\mSigma_u$, which is a valid KDE of $q$.

We can derive a similar argument when computing embeddings and dynamics on the hypersphere. When, a von Mises-Fisher distribution is suitable to express the KDE, and we obtain
\begin{equation}
    \hat q(\vx) = \frac{C_p(\kappa)}{M} \sum_{i=1}^M \exp(\kappa \vx^\top \vx_i), 
    \quad
    \vx_i \sim q(\vx),
\end{equation}
where $C_p(\kappa)$ is the normalization constant of the von Mises-Fisher distribution.
This again approaches the correct data distribution for $\hat q \rightarrow q$ as $M, \kappa \rightarrow \infty$. Following the same arguments above, but using
$\vr(\vx) = \vh(\vg(\vx)) = \mQ \vx$ as the indeterminacy on the hypersphere, we can express this in terms of the ground truth latents,
\begin{align}
    \hat q_\vh(\vx)
        &= \frac{C_p(\kappa)}{N} \sum_{i=1}^M \exp\left(\kappa \vr(\vx)^\top \vr(\vx_i) \right),\\ 
        &= \frac{C_p(\kappa)}{N} \sum_{i=1}^M \exp\left(\kappa \vx^\top \vx_i \right), 
\end{align}
which is again a valid KDE.

It is interesting to consider the effect of the KDE on the loss function.
Inserting $\psi \leftarrow \psi - \log \hat q$ into the loss function yields
\begin{align}\label{def:infonce}
    - \log p_\psi(\vx|\vx^+, N)
    &=
        -(\psi(\vx_i, \vx^+_i) - \log \hat q(\vx_i^+))
            + \log \sum_{i=1}^{N}e^{\psi(\vx_i, \vx_j^-) - \log \hat q(\vx_j^-)},\\
    &= 
        -\psi(\vx_i, \vx^+_i) + \log \hat q(\vx_i^+)
            + \log \sum_{i=1}^{N} \frac{1}{\hat q(\vx_j^-)} e^{\psi(\vx_i, \vx_j^-)},\\
    &= 
        -\psi(\vx_i, \vx^+_i)
            + \log \sum_{i=1}^{N} \frac{\hat q_h(\vx_i^+)}{\hat q_h(\vx_j^-)} e^{\psi(\vx_i, \vx_j^-)},\\
    &= 
        -\psi(\vx_i, \vx^+_i)
            + \log \sum_{i=1}^{N} w_h(\vx_i^+, \vx_j^-) e^{\psi(\vx_i, \vx_j^-)},
\end{align}
with the importance weights $w_h(\vx_i^+, \vx_j^-) = \frac{\hat q_h(\vx_i^+)}{\hat q_h(\vx_j^-)}$. Intuitively, this means that the negative examples are re-weighted according to the density ratio between the current positive and each negative sample.

\paragraph{Empirical motivation.}
    Figure~\ref{fig:kde} shows preliminary results on applying this KDE correction to contrastive learning models. We followed the setting from \citet{Zimmermann2021ContrastiveLI} and re-produced the experiment reported in Fig.~2 in their paper. We use 3D latents, a 4-layer MLP as non-linear mixing function with a final projection layer to 50D observed data. The reference, positive and negative distributions are all vMFs parameterized according to $\kappa$ (x-axis) in the case of the reference and negative distribution and $\kappa_p$ for the positive distribution.
    
    The grey curve shows the decline in empirical identifiability ($R^2$) as the uniformity assumption is violated by an increasing concentration $\kappa$ (x-axis). Applying a KDE correction to the data resulted in substantially improved performance (red lines).

    However, when testing the method directly on the dynamical systems considered in the paper, we did not found a substantial improvement in performance. One hypothesis for this is that the distribution of points on the data manifold (not necessarily the whole $\R^d$ is already sufficiently uniform.
    Hence, while the theory requires inclusion of the KDE term (and it did not degrade results), we suggest to drop this computationally expensive term when applying the method on real-world datasets that are approximately uniform.
 
\clearpage

\section{Additional Related Work}\label{appendix:related}

\paragraph{Contrastive learning.}
    An influential and conceptual motivation for our work is Contrastive Predictive Coding (CPC) \citep{oord2018representation} which uses the InfoNCE loss with an additional non-linear projection head implemented as an RNN to aggregate information from multiple time steps. Then, an affine projection is used for multiple forward prediction steps. However, contrary to our approach, the ``dynamics model'' is not explicitly parameterized, limiting its interpretability.
    Similar frameworks have been successfully applied across various domains, including audio, vision, and language, giving rise to applications such as wav2vec \citep{schneider2019wav2vec}, time contrastive networks for video (TCN; \citealp{sermanet2018time}) or CPCv2 \citep{henaff2020data}.
    An interesting parallel to DCL+LDS in image representation learning is AnInfoNCE \citep{rusak2024infonce} which extends a SimCLR-like \citep{chen2020simple} training setup by an additional learnable diagonal matrix in its similarity function.

\paragraph{Non-Contrastive learning.}
    Models such as data2vec \citep{baevski2022data2vec} and JEPA \citep{assran2023self}  learns a representation by trying to predict missing information in latent space, using an MSE loss. JEPA uses asymmetric encoders, and on top a predictor model in latent space parameterized by a neural net.  However, these approaches do not provide any identifiability guarantees.

\paragraph{System identification.}
    In system identification, a problem closely related to the one addressed in this work is known as nonlinear system identification. Widely used algorithms for this problem include Extended Kalman Filter (EKM) \citep{mcgee1985discovery} and Nonlinear Autoregressive Moving Average with Exogenous inputs (NARMAX) \citep{chen1989representations}. EKF is based on linearizing $\vg$ and $\vf$ using a first-order Taylor-series approximation and then apply the Kalman Filter (KF) to the linearized functions. NARMAX, on the other hand, typically employs a power-form polynomial representation to model the non-linearities.
    In neuroscience, practical (generative algorithms) include systems modeling linear dynamics (fLDS; \citealp{gao2016linear}) or non-linear dynamics modelled by RNNs (LFADS; \citealp{pandarinath2018inferring}). \citet{hurwitz2021building} provide a detailed summary of additional algorithms.

\paragraph{Nonlinear ICA.}
    The field of Nonlinear ICA has recently provided identifiability results for identifying latent variables, usually employing auxiliary variables such as class labels or time information \citep{hyvarinen2016unsupervised, hyvarinen2017nonlinear, hyvarinen2019nonlinear, khemakhem2020variational, sorrenson2020disentanglement}.  In the case of time series data, Time Contrastive Learning (TCL) \citep{hyvarinen2016unsupervised} uses a contrastive loss to predict the segment-ID of multivariate time-series which was shown to perform Non-linear ICA. Permutation Contrastive Learning (PCL) \citep{hyvarinen2017nonlinear} permutes the time series and aims to distinguish positive and negative pairs.

\paragraph{Temporal causal representation learning.}
    In Nonlinear ICA, the factors are assumed to be \textit{independent}, subject to some indeterminacy in the original latent variables. However, this approach encounters challenges when the latent variables have time-delayed causal relationships. Approaches like LEAP \citep{sorrenson2020disentanglement} and TDLR \citep{yao2021learning} address these challenges in both stationary and non-stationary environments, even when the transition function’s parametric form is unknown. CaRING \citep{yao2022temporally} extends these results to cases where the mixing function is non-invertible. Lastly, CITRIS \citep{lippe2022citris} introduces intervention target information to enhance the identification of latent causal factors. In this work, we do not aim to estimate the temporal causal graph. Instead, we focus on estimating the dynamics model $f$ using an interpretable and explicitly parameterized dynamics model (e.g. $\nabla$-SLDS) which can later be analyzed for applications such as scientific discovery.

\paragraph{Switching Linear Dynamical Systems.} Several papers propose methods to infer SLDSs~\citep{ackerson1970state, chang1978state, ghahramani2000variational}, leading to a variety of extensions and variants.  For example,  Recurrent SLDSs \citep{linderman2017bayesian, dai2022recurrent} address state-dependent switching by changing the switch transition distribution to $p(y_t|y_{t-1}, x_{t-1})$, allowing for more flexible dependencies on previous states. Another extension, Explicit duration SLDS introduces additional latent variables to model the distribution of switch durations explicitly \citep{chiappa2014explicit}. Some approaches relax the assumption of linear dynamics, such as in the case of SNLDS and RSSSM, where the dynamics model is assumed to be nonlinear \citep{dong2020collapsed, chow2013nonlinear}. In the context of Nonlinear Independent Component Analysis (ICA), recent extensions include structured data generating processes (e.g., SNICA; \citealp{halva2021disentangling}) which were shown to be useful for the inference of switching dynamics. In this vein, \citet{balsells2023identifiability} proposed additional identifiability theory for the switching case. Other approaches, based on Neural Ordinary Differential Equations (Neural ODEs; \citealp{chen2020learning, shi2021segmenting}), or methods aimed at discovering switching dynamics within recurrent neural networks \citep{smith2021reverse}, also present interesting avenues for modeling switching dynamics.

\paragraph{Deep state-space models.} Recently, (deep) state-space models (SSMs) such as S4 or Mamba \citep{gu2021efficiently, gu2023mamba} have emerged as a promising architecture. These models are particularly well-suited for capturing long-range dependencies, making them an attractive choice for sequence modeling tasks.

\paragraph{Symbolic Regression.} An alternative approach to modeling dynamical systems is the use of symbolic regression, which aims to directly infer explicit symbolic mathematical expressions governing the underlying dynamical laws. Examples include Sparse Identification of Nonlinear Dynamics (SINdy; \citealp{brunton2016discovering}), as well as more recent transformer-based models \citep{kamienny2022end, d2023odeformer}, which have demonstrated promise in discovering interpretable representations of dynamical systems.

\clearpage
\section{Von Mises–Fisher (vMF) conditional distributions}\label{appendix:vmf}

In the main paper, we have shown experimental results that verify Theorem~\ref{prop:1} in the case of Normal distributed positive conditional distribution and using the Euclidean distance. This approach has allowed for modeling latents and their dynamics in Euclidean space, which we argue is the most practical setting to apply \textsc{DCL} in. However, self-supervised learning methods and especially contrastive learning have commonly been applied to produce representations on the hypersphere and using the dot-product distance \citep{oord2018representation, schneider2023learnable, wang2020understanding, Zimmermann2021ContrastiveLI,chen2020simple}.

Here we validate empirically that Theorem~\ref{prop:1} equally holds under the assumption of vMF conditional distributions and using the dot-product distance $\phi(\vx, \vy) = \vx^\top \vy$ as part of the loss. We run experiments as in Table~\ref{tab:theory} for the case where the true dynamics model $\vf$ is a linear dynamical system.
Additionally, we vary the setting similar to Figure \ref{fig:slds} to show increasing $\Delta t$ (angular velocity).

We compare:
\begin{itemize}
    \item \textbf{\textsc{DCL} (ours)}  -- with linear dynamics: 
        $\hvf(\vx) = \hat \mA \vx$. 
    \item \textbf{GTD} -- the ground-truth dynamics model (LDS)
        $\hvf(\vx) = \mA \vx$. 
    \item \textbf{No dynamics} -- the baseline setting we use throughout the paper
        $\hvf(\vx) = \vx$. 
    \item \textbf{Asymmetric} -- a variation on the baseline setting that uses asymmetric encoders (one for reference, one for positive or negative) which would be a possible fix of Corollary 1. We can obtain this setting by skipping the explicit dynamics modeling, and defining two encoders $\vh_1,\vh_2$ which relate as follows:
        $\vh \circ \vf := \vh_1$,
        $\vh := \vh_2$.
\end{itemize}

\begin{figure}[h]
    \centering
    \includegraphics[width=0.8\linewidth]{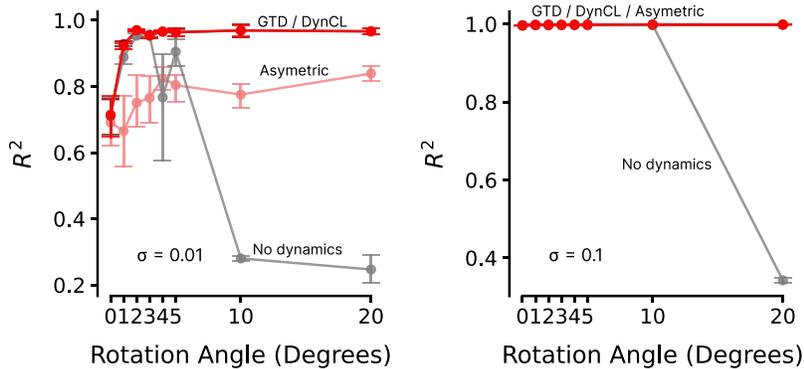}
    \caption{Our findings from  Table~\ref{tab:theory} hold equally under vMF noise distribution when using LDS ground truth dynamics. We show empirical identifiability of the latents in terms of $R^2$ under varying a)  angles of the rotation dynamics i.e. angular velocity $\Delta t$ (x-axis) and b) the magnitude of the dynamics noise $\sigma$ (panels, left: low noise, right: high noise) } 
    \label{fig:supp-vmf}
\end{figure} \label{fig:figure_vmf}

Similar to our results for the Euclidean case (Table \ref{tab:theory}), in Figure \ref{fig:supp-vmf}  we show results that experimentally verify Theorem \ref{prop:1} for latent dynamics on hypersphere and using vMF as conditional distribution. Both for low (left panel) and high (right panel) variance of the conditional distribution we can see that \textsc{DCL} effectively identifies the ground truth latents on par with the oracle (GTD) model performance. On the other hand, the baseline, standard time contrastive learning without dynamics, can not identify the ground truth latents with underlying linear dynamics as predicted by Corollary \ref{cor:symmetric}. This prediction is only violated in the case where the variance of the noise distribution is high enough, such that the noise dominates the changes introduced by the actual dynamics. This is the case for dynamics with rotations up to 4 degrees for $\sigma=0.01$ and angles up to 10 degrees for $\sigma=0.1$.

\clearpage
\section{Additional plots for SLDS}
\label{app:slds-additional-plots}

\begin{center}
    \includegraphics[width=\linewidth]{./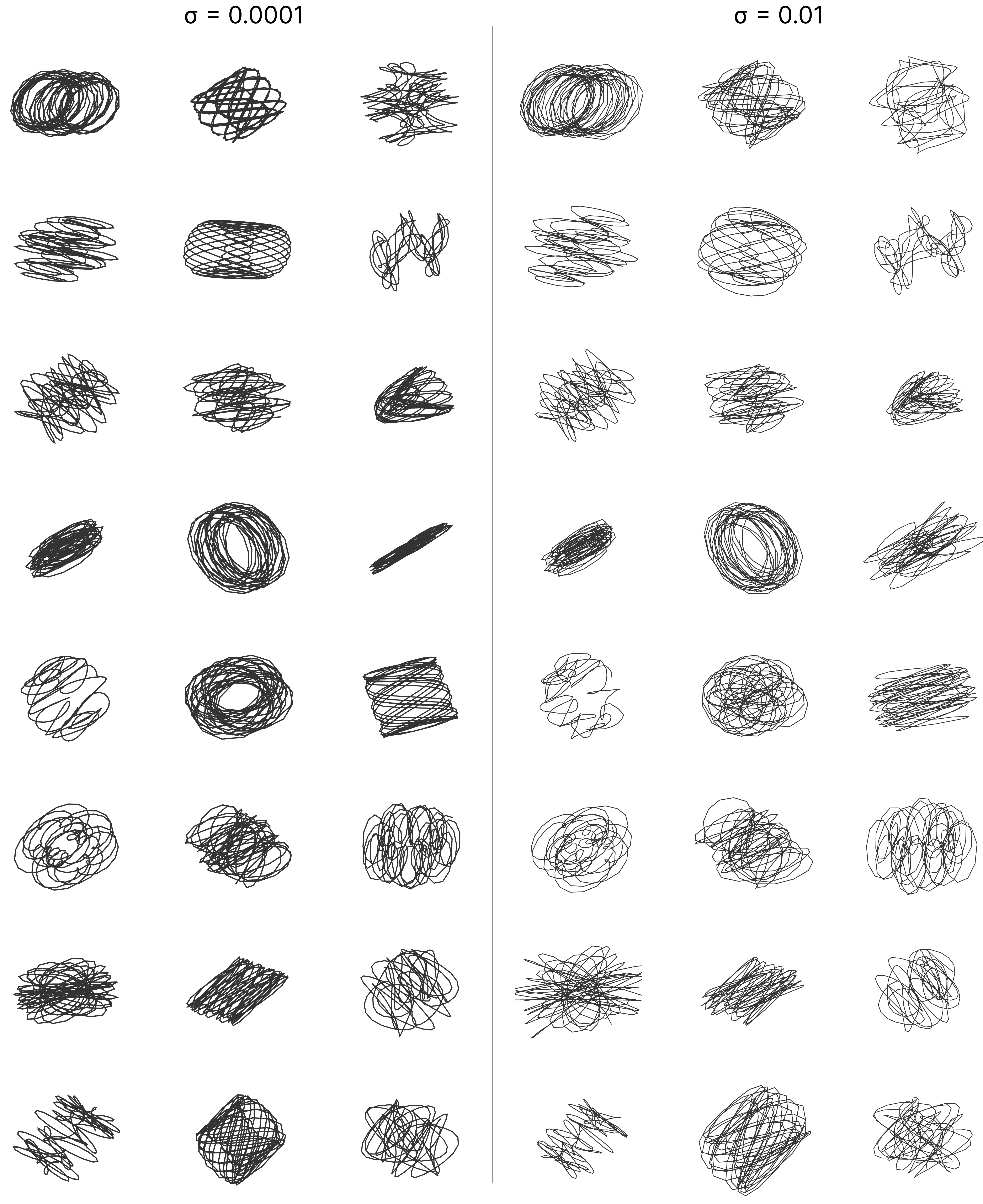}
    \captionof{figure}{%
        Visualizations of 6D linear dynamical systems at $\sigma=0.0001$ (left) and $\sigma=0.01$ for 10 degree rotations. These systems are used in our SLDS experiments.
    }
    \label{fig:slds-example-modes}
\end{center}

\clearpage
\section{Generalization -- Train- vs. Test Set}
\label{app:train-vs-test}

In the main paper, all metrics are evaluated using the full training dataset of the respective experiment. We argue that this is sufficient for showing the efficacy of our model and verifying claims from the theory in section \ref{sec:theory} because a) in self-supervised learning, the model learns generalizable representations through pretext tasks, making overfitting less of a concern; b) the metrics we are interested in are about uncovering the true underlying latent representation and dynamics of the available training data, not of new data; and c) most importantly, we ensured that the training dataset is large enough to approximate the full data distribution.

Nonetheless, here we show a series of control experiments to re-evaluate models on new and unseen data. We do so by following the same data generating process of the given experiment (same dynamics model and mixing function) and sample completely new trials (10\% of the number of trials of the training dataset). Every new trial starts at a random new starting point, with a randomly sampled new mode sequence and regenerated with different seeds for the dynamics noise.

First, we re-evaluated every experiment of Figure \ref{fig:variations} on the test dataset generated as described above and show these results in Figure \ref{fig:variations_testset}. Comparing those results to the train dataset version of Figure \ref{fig:variations} shows that there is almost no difference in the performance (with regard to identifiability and systems identification). The difference are so small, that visually comparing the results almost becomes impractical, so we additionally provide the exact numbers of the first panel (variations on the number of samples per trial) in Table \ref{tab:variation_num_samples}.

Finally, to qualitatively and quantitatively show the difference between the train and test datasets we provide a) depictions of the ground truth latents of 5 random test trials and their closest possible matching trial from the training set in Figure \ref{fig:closest_train_trial} and b) a distribution of the distances (in terms of $R^2$ between the data from the test and train trials) between all test trials of one of a random test set and their closest trial from the training dataset in Figure~\ref{fig:closest_train_trial_r2}.

\begin{figure}[h]
    \centering
    \includegraphics[width=\linewidth]{./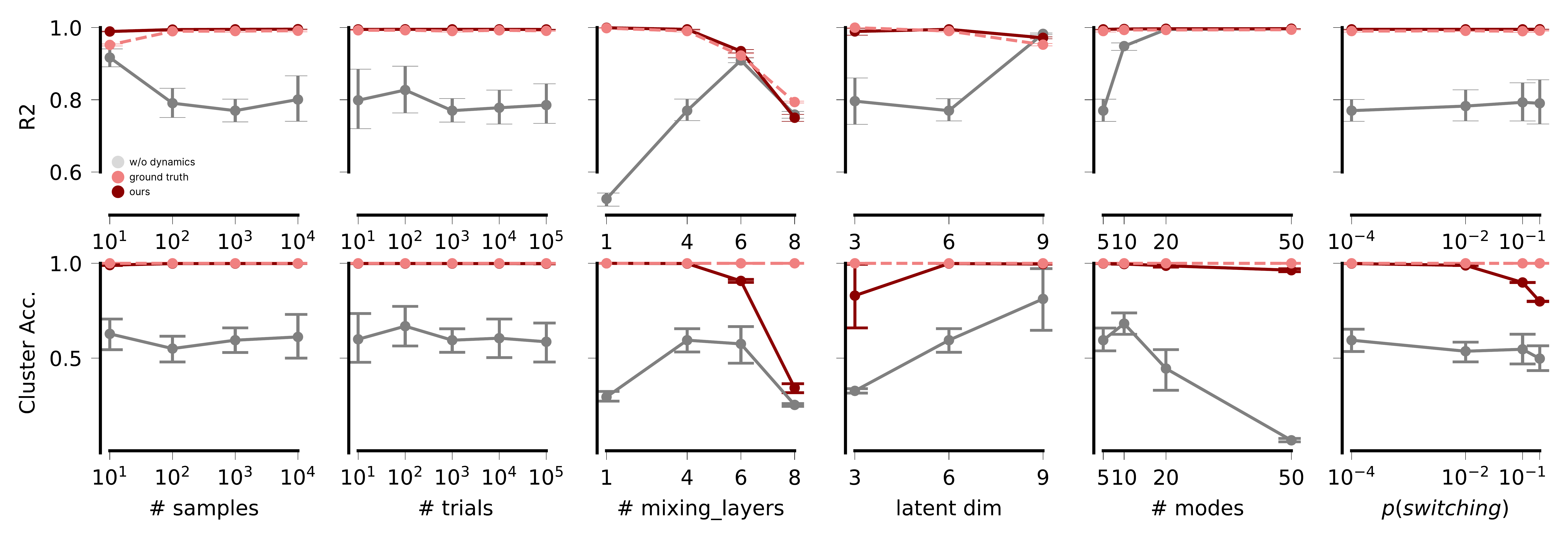}
    \captionof{figure}{%
       Same as Figure \ref{fig:variations} (Variations and ablations for SLDS), but re-evaluated on a newly generated test data with different starting points. 
    }\label{fig:variations_testset}
\end{figure}

\begin{table}[h]
    \centering
    \caption{A detailed view on the \#samples panel from Figure \ref{fig:variations} and \ref{fig:variations_testset} showing the difference between train  and test set evaluation. }
    \label{tab:variation_num_samples}
    \resizebox{\textwidth}{!}{
        \begin{tabular}{lllllll}
        \toprule
        & \multicolumn{2}{l}{\textsc{DCL} (ours)} & \multicolumn{2}{l}{CL w/o dynamics} & \multicolumn{2}{l}{CL w/ ground truth dynamics} \\
        & \(R^2\) (train) & \(R^2\) (test) & \(R^2\) (train) & \(R^2\) (test) & \(R^2\) (train) & \(R^2\) (test) \\
        \# samples &  &  & &  &  & \\
        \midrule
        10          &     0.991 \(\pm\) 0.00137 &    0.989 \(\pm\) 0.00172 &     0.923 \(\pm\) 0.03703 &    0.917 \(\pm\) 0.04000 &     0.954 \(\pm\) 0.00397 &    0.952 \(\pm\) 0.00442 \\
        100         &     0.995 \(\pm\) 0.00106 &    0.994 \(\pm\) 0.00108 &     0.786 \(\pm\) 0.06794 &    0.791 \(\pm\) 0.06931 &     0.990 \(\pm\) 0.00107 &    0.990 \(\pm\) 0.00099 \\
        1000        &     0.995 \(\pm\) 0.00074 &    0.995 \(\pm\) 0.00078 &     0.765 \(\pm\) 0.06671 &    0.770 \(\pm\) 0.06973 &     0.990 \(\pm\) 0.00507 &    0.990 \(\pm\) 0.00511 \\
        10000       &     0.996 \(\pm\) 0.00046 &    0.996 \(\pm\) 0.00044 &     0.694 \(\pm\) 0.06937 &    0.801 \(\pm\) 0.10568 &     0.991 \(\pm\) 0.00509 &    0.991 \(\pm\) 0.00425 \\
        \bottomrule
        \end{tabular}
    }
\end{table}

\begin{center}
    \includegraphics[width=\linewidth]{./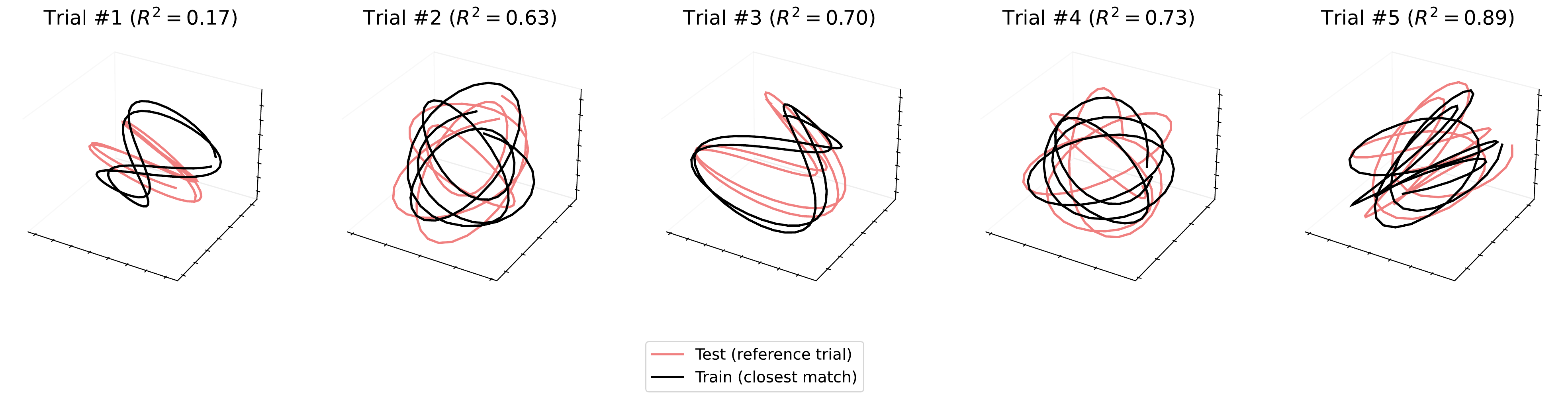}
    \captionof{figure}{
       Ground truth latents from five random trials of the testsets for Figure \ref{fig:variations_testset} and their closest match within the corresponding trainset. The closest match is evaluated by computing the $R^2$-Score between a given trial from the testset and every possible trial.} \label{fig:closest_train_trial}
\end{center}

\begin{center}
    \centering
    \resizebox{0.5\linewidth}{!}{  
    \includegraphics{./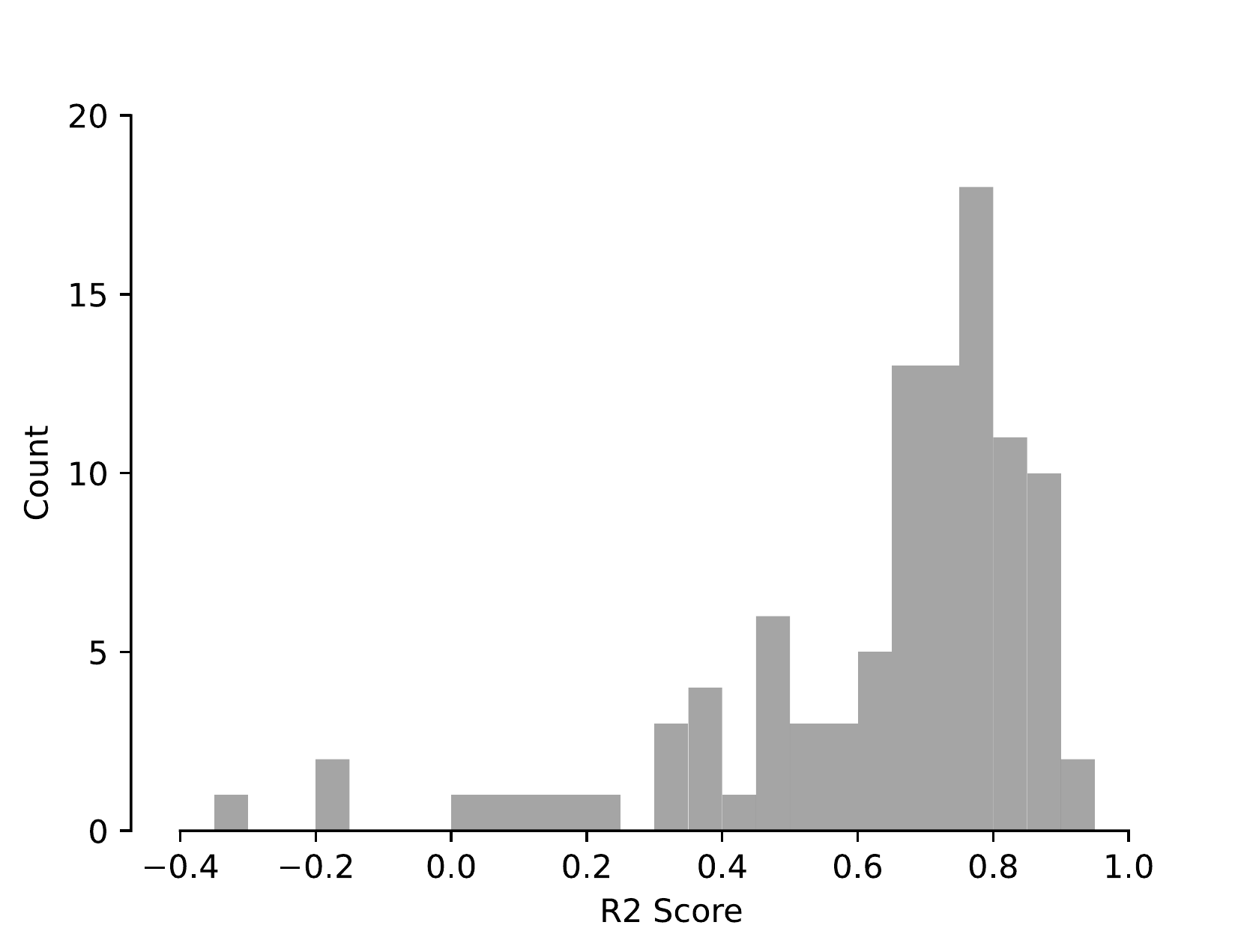}
    }
        \captionof{figure}{%
           Histogram of all $R^2$-Scores between every trial from the testset and its closest possible match from the trainset as shown in Figure \ref{fig:closest_train_trial}.
        }\label{fig:closest_train_trial_r2}
\end{center}

\clearpage
\section{Variations and additional baselines for the Dyn\texorpdfstring{$R^2$}{R2} Metric}
\label{app:dynr2}

\subsection{Method}

As an addition to Table~\ref{prop:1}, we analyse the Dyn$R^2$ in more detail. In Table~\ref{tab:appendix_dyn_r2} we show variants for the metric. Firstly, we modify the number of forward prediction steps,
\begin{align}\label{eq:dynr2-unrolled}
    \vf^{n}(\vx) := (\underbrace{\vf \circ \cdots \circ \vf}_{n \text{ times}})(\vx),
\end{align}
and respectively for $\hvf^{n}$ in relation to $\hvf$.
We then consider two variants of Eq.~\ref{eq:dynr2}. Firstly, we perform multiple forward predictions ($n > 1$) and compare the resulting embeddings:
\begin{equation}\label{eq:dynr2-control}
    \text{r2\_score}(\hat \vf^n(\hat\vx), \mL \vf^n(\mL' \hat\vx + \vb') + \vb).
\end{equation}
A rationale for this metric is that the prediction task becomes increasingly difficult with an increasing number of time steps, and errors accumulate faster.

Secondly, as an additional control, we replace $\hvf$ with the identity, and compute 
\begin{equation}
    \text{r2\_score}(\hat\vx, \mL \vf^n(\mL' \hat\vx + \vb') + \vb).
\end{equation}
This metric can be regarded as a naiive baseline/control for comparing performance of the dynamics model. If the dyn$R^2$ is not significantly larger than this value, we cannot conclude to have obtained meaningful dynamics.

For the lower part of Table~\ref{prop:1}, we report the resulting metrics in Table~\ref{tab:appendix_dyn_r2}, setting the number of forward steps $n$ to $1$ or $10$, and using either the original metric (Eq.~\ref{eq:dynr2-unrolled}), or the control (Eq.~\ref{eq:dynr2-control}).

\subsection{Results}

    For the SLDS system, we can corroborate our results further: the baseline model obtains a dyn$R^2$ of around 85\% for single step prediction, both for the original and control metric. Our $\nabla$-SLDS model and the ground truth dynamical model obtain over 99.9\% well above the level of the control metric which remains at around 95\%. The high value of the control metric is due to the small change introduced by a single time step, and should be considered when using and interpreting the metric. If more steps are performed, the performance of the $\nabla$-SLDS model drops to about 95.5\% vs. chance level for the control metric, again highlighting the high performance of our model, but also the room for improvement, as the oracle model stays at above 99\% as expected.

    For the Lorenz system, we do not see a substantial difference between original dyn$R^2$metric and dyn$R^2$ control for any of the considered algorithms. Yet, as noted in the main paper, $\nabla$-SLDS is the only dynamics model that gets a high $R^2$ of 94.08\%, vs. the lower 81.20\% for a single LDS model, or 40.99\% for the baseline model. 
    In other words, while \textsc{DCL} with the $\nabla$-SLDS dynamics model falls short of identifying the true underlying dynamics for this non-linear chaotic system, without \textsc{DCL} we wouldn't even identify the latents. We leave optimizing the parameterization of the dynamics model to identify non-linear chaotic systems for future work. 

\begin{table}[ht]
    \centering
    \footnotesize
    \setlength{\tabcolsep}{4pt}
    \caption{Extended metrics for dynamics models including additional variations on the $dynR^2$ metric where \textit{Control} is replacing $\hat \vf$ with the identity and \textit{10 Steps} is applying $\hat \vf$ (and $\vf$) 10 times, i.e., predicting 10 steps forward instead of only one step as is done in the \textit{Original} version.}
    \label{tab:appendix_dyn_r2}
    \resizebox{\textwidth}{!}{
    \begin{tabular}{ccccc|cccc}
        \toprule
        \multicolumn{2}{c}{data} & & \multicolumn{2}{c}{model} &
            \multicolumn{2}{c}{\% dyn$R^2$, n=1 step} & \multicolumn{2}{c}{\% dyn$R^2$, n=10 steps} \\
        $\vf$ & $p(\noise)$ && $\hat \vf$ & identifiable & Original & Control & Original & Control \\
        \midrule
        SLDS & Normal && identity & \xmark & 85.47 $\pm$ 8.07 & 84.54 $\pm$ 7.31 & 2.78 $\pm$ 9.34 & -58.62 $\pm$ 7.22 \\
        SLDS & Normal && $\nabla$-SLDS & (\cmark) & 99.93 $\pm$ 0.01 & 95.15 $\pm$ 0.68 & 95.53 $\pm$ 0.47 & -124.65 $\pm$ 6.97 \\
        SLDS & Normal && GT & (\cmark) & 99.97 $\pm$ 0.00 & 94.94 $\pm$ 0.68 & 99.36 $\pm$ 0.18 & -129.32 $\pm$ 6.95 \\
        \midrule
        Lorenz & Normal && identity & \xmark & 27.02 $\pm$ 8.72 & 27.17 $\pm$ 8.74 & 22.87 $\pm$ 7.13 & 24.85 $\pm$ 7.14 \\
        Lorenz & Normal && LDS & \xmark & 80.30 $\pm$ 14.13 & 82.98 $\pm$ 12.64 & -13.07 $\pm$ 41.03 & 42.08 $\pm$ 26.14 \\
        Lorenz & Normal && $\nabla$-SLDS & (\cmark) & 93.91 $\pm$ 5.32 & 93.70 $\pm$ 5.11 & 34.48 $\pm$ 6.47 & 55.75 $\pm$ 6.01 \\
        \bottomrule
    \end{tabular}}
\end{table}

\clearpage
\section{Non-Injective Mixing Functions}
\label{app:non_injective}

Our identifiability guarantees (Theorem \ref{prop:1}, Def. \ref{def:data-generation-rigorous}) require an injective mixing function $\vg(\vx_t) = \vy_t$.
On the first glance, this clashes with common requirements in system identification under \emph{partial observability}. Specifically, let us consider a system of the form:
\begin{equation}\label{eq:partial_obs_lds}
\begin{aligned}
    \vx_{t+1} &= \vf( \vx_{t}) + \noise_t = \mA \vx_{t} + \noise_t\\
    \vy_{t} &= \mC\vx_{t} = \vg(\vx_{t}),
\end{aligned}
\end{equation}
where $\vx_t \in \mathbb{R}^{n}$, $\vy_t \in \mathbb{R}^{m}$, $\mC \in \mathbb{R}^{m \times n}$ with $n > m$ is a non-square matrix that projects the states of the dynamical system into a lower dimensional space. Similarly, we may have $n \leq m$ with $\text{rank}(\mC) < n$.
In those cases, $\mC$ and hence $\vg$ is non-invertible, and naively, the injectivity assumptions of Def. \ref{def:data-generation-rigorous} would fail to hold.

However, in practice this issue can be tackled through the use of a time-delay embedding. Specifically, we consider the following reformulation of the system:
\begin{align}\label{eq:time-lag-default}
    \tilde \vy^i_t := \begin{bmatrix} 
        \vy_{t-\tau} \\ 
        \vy_{t-\tau+1} \\ 
        \vy_{t-\tau+2} \\ 
        \vdots \\ 
        \vy_{t} 
    \end{bmatrix} &= 
    \underbrace{
        \begin{bmatrix} 
            \mC \\ 
            \mC\mA \\ 
            \mC\mA^2 \\ 
            \vdots \\ 
            \mC\mA^{\tau} 
        \end{bmatrix}
    }_{\mO} 
    \vx_{t-\tau} + 
    \begin{bmatrix} 
            \mathbf{0} \\ 
            \vnu^1_t \\ 
            \vnu^2_t \\ 
            \vdots \\ 
            \vnu_t^{\tau} 
    \end{bmatrix}
    \\
    \text{where }\vnu^\tau_t &:= \mC \sum_{i=0}^{\tau-1} \mA^{i} \noise_{t-i}.
\end{align}
For a sufficiently large time lag $\tau$, the linear map $\tilde \vg(\vx_{t-\tau}) = \mO \vx_{t-\tau} = \tilde \vy^i_t$ will become injective again.
This is the case if $\mO$ is full rank which holds if $\mA$ is full rank, since $\vf$ is bijective and therefore $\mA$ is a full rank square matrix. For example, if $\mC$ had rank $m$, then using at least $\tau=\frac{n}{m}$ time steps would make $\tilde \vg$ injective and our theoretical guarantees from Theorem~\ref{prop:1} would hold, up to the offset introduced by the noise $\vnu$.

In practice, the change in latent space between different time steps might be small (especially when the time resolution of the system is very high). A practical way to avoid feeding increasingly large inputs, is to not feed in all time-lags $0\dots\tau$ into the construction of $\mO$, but to %
subselect $k$ time lags $\tau_1, \dots, \tau_k$, with $\tau_1=0$ and $\tau_k=\tau$, and instead consider the system
\begin{align}\label{eq:time-lag-smart}
    \tilde \vy^i_t := \begin{bmatrix} 
        \vy_{t-\tau} \\ 
        \vy_{t-\tau+\tau_2} \\ 
        \vdots \\ 
        \vy_{t-\tau+\tau_{k-1}} \\ 
        \vy_{t-\tau_n} 
    \end{bmatrix} &= 
    \underbrace{
        \begin{bmatrix} 
            \mC \\ 
            \mC\mA^{\tau_2} \\ 
            \vdots \\ 
            \mC\mA^{\tau_{k-1}} \\ 
            \mC\mA^{\tau} 
        \end{bmatrix}
    }_{\mO} 
    \vx_{t-\tau_1} +
    \begin{bmatrix} 
            \mathbf{0} \\ 
            \vnu^1_t \\ 
            \vnu^2_t \\ 
            \vdots \\ 
            \vnu_t^{\tau} 
    \end{bmatrix}.
\end{align}
This system allows to have a sufficiently large context window (from $t-i_1$ to $t$) to ensure injectivity, while keeping the input dimensionality of the model fixed. Note, when we set $\tau_i = i-1$ for each $i$, we recover Eq.~\ref{eq:time-lag-default}.
 
Regarding the noise vector $\vnu$, \citet{halva2021disentangling} recently showed that noisy and noise-free demixing problems can be mapped onto each other. While a full rigorous proof in conjunction with our Theorem~\ref{prop:1} is beyond the scope of this work, invoking Theorem~1 of \citet{halva2021disentangling} to account for $\vnu$ is a promising avenue. Importantly, our empirical validation later on already shows that the model is in practice indeed functioning even in the presence of the noise distribution.

\subsection{Experimental Validation}

\begin{figure}[t]
    \centering
    \includegraphics[width=\linewidth]{./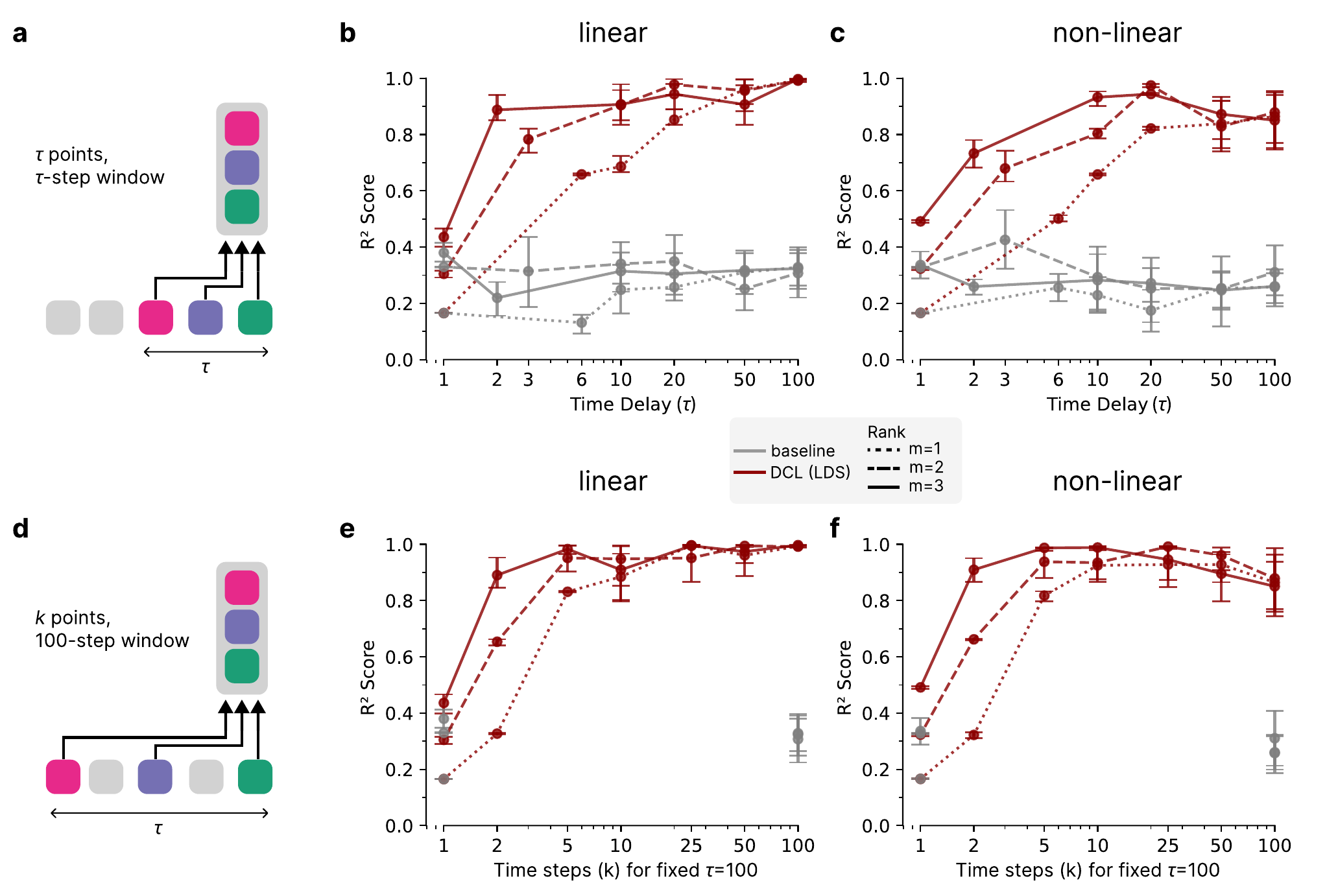}
    \caption{Non-injective mixing functions can be successfully handled by a time-lag embedding. \textbf{a}, in the first setting, we pass observations from $\tau$ consecutive time steps into our feature encoder. \textbf{b}, empirical identifiability of the latent space ($R^2$) for baseline (no dynamics) vs. \textsc{DCL} (linear dynamics) as we increase $n$ for a linear and \textbf{c} non-linear mixing function. \textbf{d}, to achieve injective mixing functions through time-lag embeddings, we here include the full 100-step length window, but only pass $k$ equidistantly spaced points within this window of length $\tau$. \textbf{e}, empirical identifiability for baseline (no dynamics) vs. \textsc{DCL} (linear dynamics) as we increase the number of points in the context for a fixed $\tau=100$-step window and \textbf{f} for nonlinear mixing.
    }\label{fig:non_injective}
\end{figure}

We validate our theoretical considerations by two sets of experiments as closely related to the LDS setting from Table~\ref{tab:theory} as possible. 
We use two types of mixing functions: 
\begin{align}
    \vg(\vx) &= \mC_1\mC_2 \vx_t\label{eq:mixingr_lowrank_linea} \\
    \vg(\vx) &= \mC_2 \vg'(\mC_1 \vx_t),\label{eq:mixingr_lowrank_nonlinear}
\end{align}
where  $\mC_1 \in \mathbb{R}^{m \times r}$, $\mC_2 \in \mathbb{R}^{r \times n}$ are randomly sampled, $\vg: \mathbb{R}^{r} \rightarrow \mathbb{R}^{r}$ is a random injective and nonlinear function and $m$ is the observable dimension, $n$ is the latent dimension and $r$ is the matrix rank. In line with the LDS experiments from Table~\ref{tab:theory}, we set $n=6$ and $m=50$. The mixing functions from Eq. \ref{eq:mixingr_lowrank_linea} \& \ref{eq:mixingr_lowrank_nonlinear} are then applied to the latents, including the parameter $r \in \{1,2,3\}$ to restrict the rank and thereby dimensionality of the mixing functions to $r$.

We run experiments for different numbers of time steps that get passed to the encoder $\vh$. In the first setting, we use $k$ consecutive time steps. In this setting, we expect that \emph{more} time steps than the theoretical minimum $n/m$ are required, because the variation between consecutive time steps is limited. To corroborate this hypothesis, we run a second variant where the length of the context window is fixed, and $k$ equidistantly placed points are selected to be fed in the encoder.

As in Table~\ref{tab:theory}, we repeat each experiment 9 times with different dataset and model seeds and report the 95\% confidence interval.

\subsection{Results}

Generally, we can see from Figure \ref{fig:non_injective} that we can successfully verify our theoretical considerations about weakening the injectivity constraint.

To begin with, we confirm that the default parametrization of \textsc{DCL} with only a single time step $i=1$ cannot solve the demixing problem. Next, for the theoretical minimum of time steps ($i=6$ for rank 1, $i=3$ for rank 2, $i=2$ for rank 3) we can already observe a considerable improvement of the empirical identifiability for \textsc{DCL} in terms of the $R^2$ Score: $16\% \rightarrow 66\%$ for rank 1, $30\% \rightarrow 78\%$ for rank 2, $43\% \rightarrow 88\%$ for rank 3 (Fig. \ref{fig:non_injective}b). As mentioned above, the minimal amount of time steps required for the identifiability guarantees only hold for $\vnu_t = 0$ which is not the case in these experiments. As we increase the number of time points to $100$, we can see that \textsc{DCL} approaches close to perfect $R^2$ Scores: For linear mixing, we get the best results for $i=100$ with 99\% $R^2$ for ranks 1, 2, and 3. While for nonlinear mixing, averaged across seeds, we get up to 86\% for rank 1 and $i=100$, 97\% for rank 2 and $i=20$, and 94\% for $i=20$.
In contrast, the baseline without a dynamics model, does not benefit at all from the additional time steps.

To further test our intuition about recovering injectivity, we include an experiment where the full 100-step length window is used, but only $n$ equidistantly spaced time steps are passed. In this setup, we much quicker recover acceptable identifability scores for the linear (Fig. \ref{fig:non_injective}e) and non-linear mixing settings (Fig. \ref{fig:non_injective}f).

\clearpage
\section{Dynamical Systems with Control Signal}
\label{app:control}

We have initially introduced the problem formulation of this paper (Equation \ref{eq:eq1}) as identifying the latent variables $\vx_t$ and the governing latent dynamics $\vf$ from the observations  $\vy_t$ for:
    \begin{equation}
        \begin{aligned}
            \vx_{t+1} &= \vf(\vx_{t}) + \mB\vu_t + \noise_t \\
            \vy_{t} &= \vg(\vx_{t}) + \vnu_t.
        \end{aligned}
    \end{equation} 
with control signal $\vu$, its control or actuator matrix $\mB$, system noise $\noise$ and observation noise $\vnu$.

However, so far we have only explicitly considered autonomous latent dynamics (see Def. \ref{def:data-generation-rigorous}), which by definition did not include a control input $\vu_t$. In this section, we show that the \textsc{DCL} framework works equally well in the presence of a control input and verify this empirically.

Being able to include a control signal is crucial for many applications and common practice in systems identification literature. Without adding $\vu_t$ to the model, the task of identifying dynamics gets substantially harder as the effects of the control signal become entangled with the intrinsic dynamics of the system. Additionally, only identifying the combined dynamics without factoring out the effect of $\vu_t$ would make the framework less useful as it would not allow predictions in the presence of new/different control inputs.

\subsection{Empirical Verification}

We extend our existing experiments for linear dynamical systems (LDS) by including a control signal $\vu_t$ in the data generating process: 
\begin{equation}\label{eq:lds_w_control}
    \begin{aligned}
        \vx_{t+1} &= \mA\vx_{t} + \mB\vu_t + \vepsilon_t. \\
    \end{aligned}
\end{equation} 

We train and evaluate four model variants: 
\begin{itemize}
    \item \textbf{Baseline}: Identity dynamics model, using the control input with with a trainable $\mB$. The dynamics model is fit post-hoc,
    \item \textbf{\textsc{DCL}}: with a LDS dynamics model where $\mB=0$,
    \item \textbf{\textsc{DCL} w/ control:} with a LDS dynamics model and trainable $\mB$.
\end{itemize}

We choose the control signal $\vu_t$ to be generated from: 
\begin{itemize}
    \item \textbf{Step function}: A composition of a negative and positive step function, starting at random time steps and random magnitudes.
    \item \textbf{Linear Dynamical System}: $\vu_{t+1} = \mA_u\vu_{t} +\vepsilon_t$, similar to the LDS system used before for latent dynamics.
\end{itemize}

\subsection{Experiment Details}

We generate three datasets with linear dynamics using a) no control, b) control following another LDS, and c) control following a step function.
Each dataset consists of 1000 trials, each trial is 1000 time steps long. 
The latent linear dynamics have intrinsic rotational dynamics with rotation angles $\theta_i \sim \text{Uniform}[0,10]$ and control matrix $\mB \sim \mathcal{N}(\mu=0.01, \Sigma=\mI \cdot 0.01)$. The system noise $\noise$ follows a standard normal with $\sigma_\noise=0.001$. The mixing function $\vg$ is the same nonlinear mixing function with 4 layers as for Table~\ref{tab:theory}. We use 5-dimensional latents and have 50-dimensional observations. 
For the control following another LDS, we use the same sampling strategy for the parameters as for the latent dynamics. For the control following a step function, we pick two points $T_1$, $T_2$ such that the $T_2 - T_1 = 200$ and that the step is centered within the trial including some random offset.
We use different controls $u_t$ for every trial. 
We extend \textsc{DCL} with a parametrization of the linear dynamics model that follows Eq.~\ref{eq:lds_w_control} and includes the control $u_t$ and trainable control matrix $\mB$.  We use the same dynamics model for the baseline. However there, this only affects the post hoc dynamics fitting. We also train \textsc{DCL} with a LDS model that does not include the control input. 
We train every model for 20k steps and besides that, we use the same hyperparameters as for training the LDS models in Table \ref{tab:theory}.

\subsection{Results}

As shown in Table~\ref{tab:control-signal}, when applying a step function for the control signal (Step 1D), \textsc{DCL} is able to identify both the latent space (98.7\% $R^2$) and the dynamics (99.5\% dyn$R^2$). This result holds even when the control signal $\vu$ is not used for training the model. However, a baseline with identity dynamics is not able to identify the dynamical system with a substantially lower dyn$R^2$ of 87.3\%. Nevertheless, the latent space can be estimated reasonably well, although not perfect (91.8 \% $R^2$), most likely because the availability of $\vu$ converts the de-mixing problem into a supervised learning problem (we have access to $(\vu, \vy)$ pairs).
This is reflected in the visualization in Figure~\ref{fig:control-signal}: While the baseline (identity dynamics, but using $\vu$) reasonably estimates the direction of $\vu$ provided during training, the local dynamics cannot be fitted. We highlighted a gray box denoting the phase where $\vu = 0$ to facilitate easier comparison.

To test \textsc{DCL} for more complex control signals, we also apply a full 5D LDS as the control signal $\vu$ (Table~\ref{tab:control-signal}, LDS 5D). Both latent space estimation and dynamics estimation performs on par with the step setting (\textgreater 98\%), while the baseline fails to estimate either the latent space or the dynamics with $R^2$ values below 80\%. 

\begin{table}[t]
    \centering
    \footnotesize
    \caption{Empirical identifiability results when including both system noise and a deterministic control signal $\vu$. The ground truth dynamics $\vf$ are chosen as an LDS, and the intrinsic dynamics model $\hat \vf$ is either an LDS, or the identity (baseline); $\mB \vu$ indicates whether the dynamics model includes the control or not; $p_\noise(\noise)$ is Normal (small $\sigma$). Mean $\pm$ std. are across 3 datasets and 3 experiment repeats.
    }
    \label{tab:control-signal}
    \begin{tabular}{cccccc}
    \toprule
    Data & \multicolumn{2}{c}{Model} & \multicolumn{2}{c}{Results} \\
    Control ($\vu$) & $\hat\vf$ & $\hat \mB\vu$ &  \%$R^2$ $\uparrow$ &    \%dyn$R^2$ $\uparrow$ \\%& LDS $\downarrow$   \\
    \cmidrule(r){1-1} \cmidrule(r){2-3} \cmidrule{4-5}
    Step (1D) & identity & $\cmark$ %
        &   91.80 $\pm$ 1.15 &  87.27 $\pm$ 10.6 \\%&    0.41 \\%$\pm$ 0.02 \\
    & LDS & $\xmark$ %
        &   98.36 $\pm$ 0.66 &   99.16 $\pm$ 1.00 \\%&    0.06 \\%$\pm$ 0.06 \\
    & LDS & $\cmark$ %
        &   98.70 $\pm$ 0.44 &   99.53 $\pm$ 0.27 \\%&    0.05 \\%$\pm$ 0.04 \\
    \cmidrule(r){1-1} \cmidrule(r){2-3} \cmidrule{4-5}
     LDS (5D) & identity & $\cmark$ %
        &  74.17 $\pm$ 13.8 &   76.47 $\pm$ 7.51 \\%&    0.36 $\pm$ 0.04 \\
      & LDS & $\xmark$ %
        &   98.26 $\pm$ 0.17 &   99.87 $\pm$ 0.04 \\%&    0.05 $\pm$ 0.01 \\
     & LDS & $\cmark$ %
        &   98.10 $\pm$ 0.35 &   99.85 $\pm$ 0.08 \\% &    0.05 $\pm$ 0.01 \\
    \bottomrule
    \end{tabular}
\end{table}

\begin{figure}
    \centering
    \includegraphics[width=\linewidth]{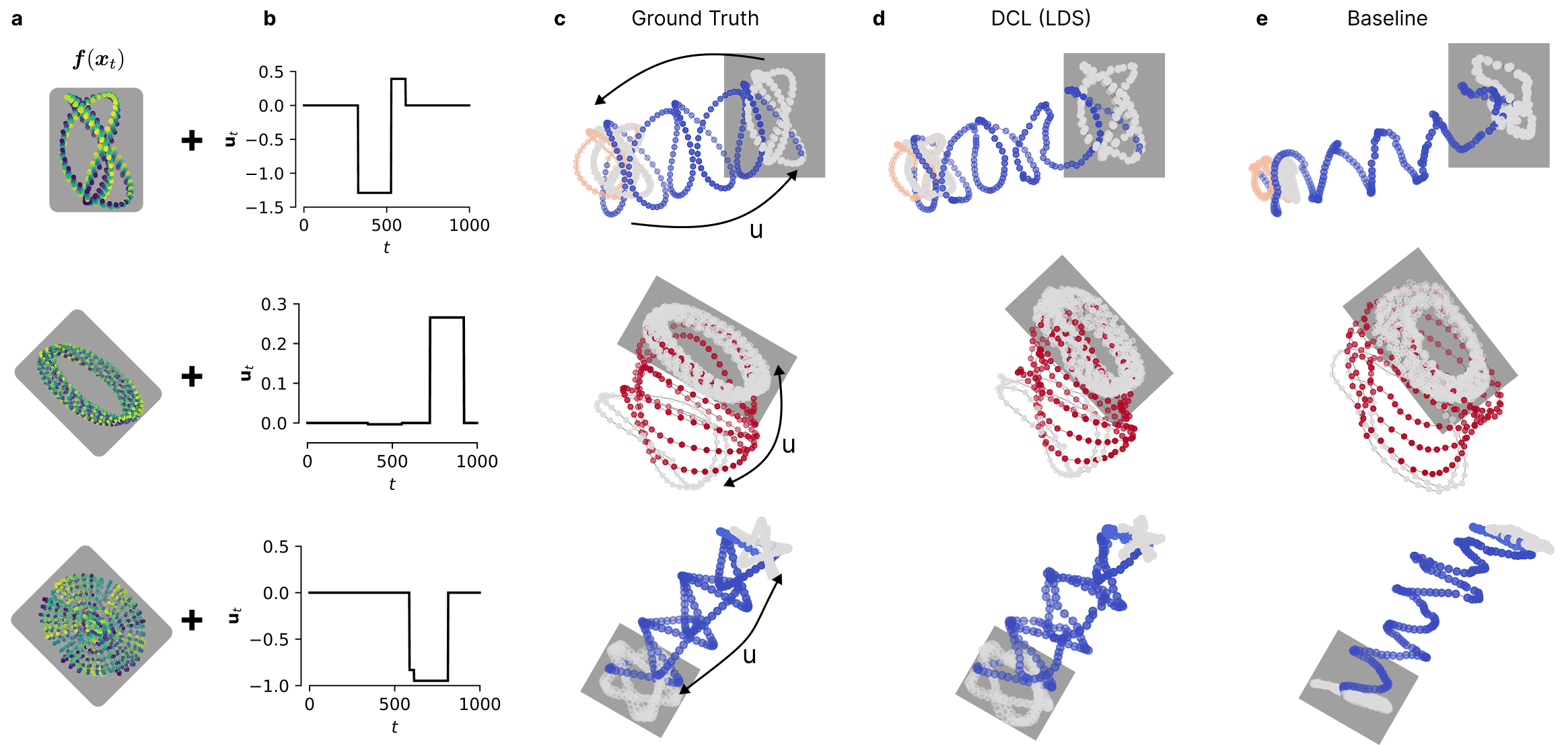}
    \caption{Visualization of dynamics inference in the presence of a control signal $\vu$. \textbf{a} LDS dynamics are complemented by a \textbf{b} 1D step function signal, which \textbf{c} is projected to 5D using a random matrix $\mB$ (not shown). The ground truth dynamics then show signatures of the autonomous dynamics when $\vu = 0$ (gray box), and move through latent space as we apply the step function. \textbf{d}, \textsc{DCL} uses a dynamics model and the control input $\vu$; \textbf{e}, the baseline uses only the control input $\vu$. The three rows show different dynamical systems, each plot is one trial with 1000 steps.}%
    \label{fig:control-signal}
\end{figure}

\clearpage
\section{Application to Real-World Data}
\label{app:allen_dataset}

\begin{figure}[ht]
    \centering
    \includegraphics[width=\linewidth]{./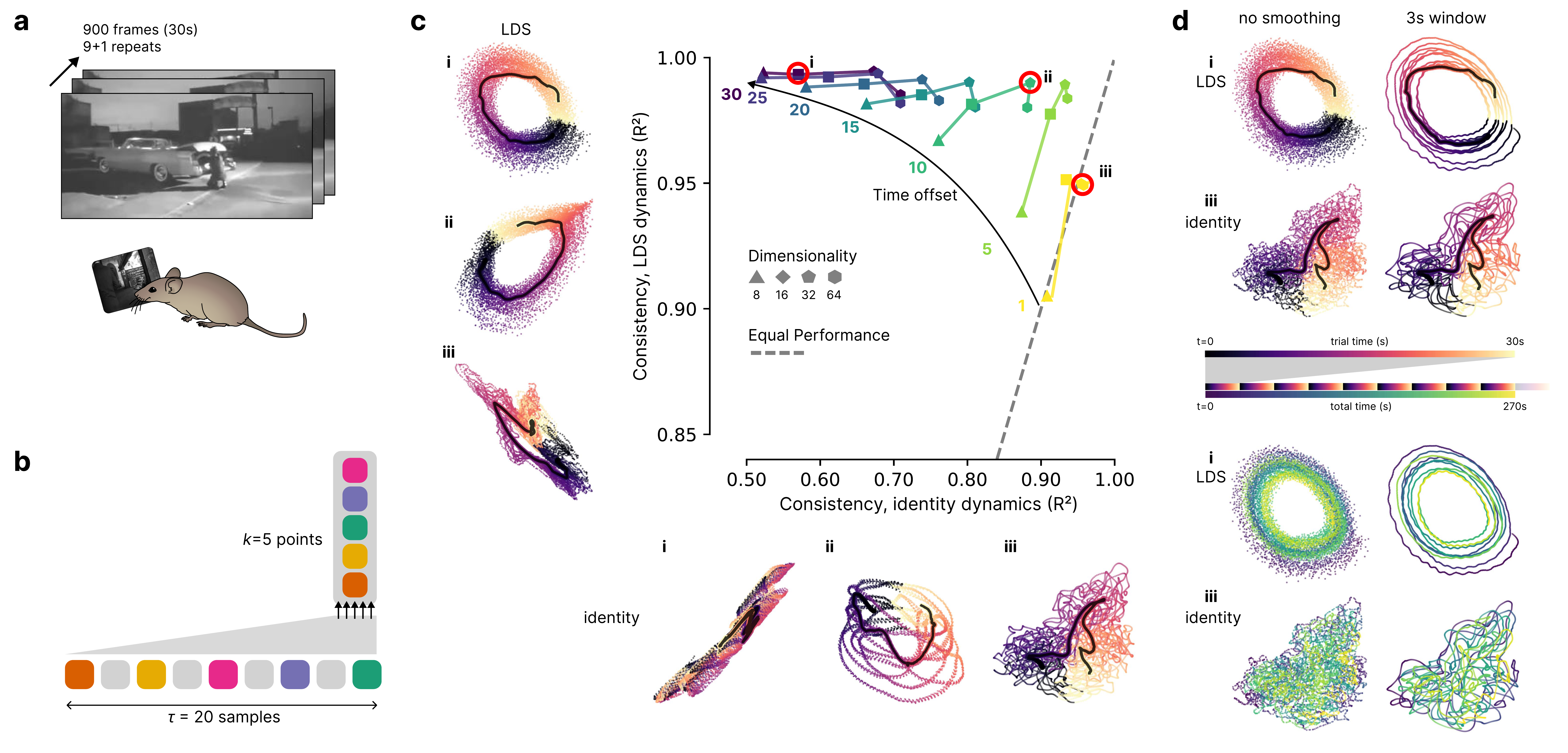}
    \caption{Application to real-world neural data. (a) We leverage the Allen visual coding dataset using calcium imaging data \citep{devries2020large}. We consider the subset of the data which is comprised of 10 movie repeats with 900 samples each at 30Hz. We hold out the last trial for validation. Mouse icon from scidraw.io (Ann Kennedy), panel adopted from \citet{schneider2023learnable}. 
    (b) We feed multiple time steps to the model as outlined in Appendix~\ref{app:non_injective}. All experiments concat five equidistant points with a maximum time lag of at the sample locations $(t, t-5, \dots, t-20)$.
    (c) Quantitative comparison of \textsc{DCL} with identity vs. linear dynamics and qualitative overview of embeddings computed for the hyperparameters that (i) maximize consistency for LDS dynamics, (ii) have high consistency for both models, (iii) maximize consistency for identity dynamics. Train embedding is depicted as points (unsmoothed for LDS, smoothed for identity to show structure), test embedding as smoothed black trajectory.
    (d) Effect of smoothing applied to the embedding, left column depicts unsmoothed, right column smoothed embedding on train repeats. Upper panel is colored according to trial time, lower panel is colored according to overall time (train portion of data, test only shown above).
    In all embedding plots, the test-embedding smoothed with a 151-sample filter is overlayed as a black line.
    }\label{fig:allen}
\end{figure}

\begin{figure}[t]
    \centering
    \includegraphics[width=\linewidth]{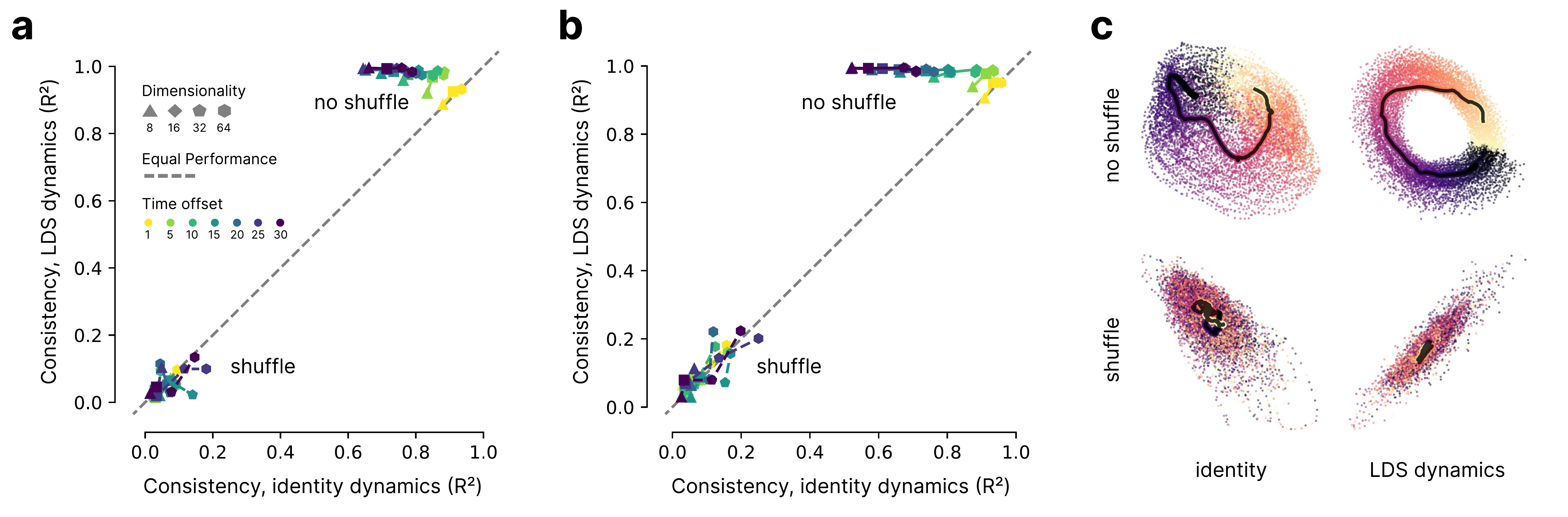}
    \caption{Comparison of consistency with a shuffle control on (a) the train split and (b) the validation split. (c), example embeddings for both identity and LDS dynamics, in shuffle and no shuffle conditions.}
    \label{fig:allen-shuffle}
\end{figure}

Here we evaluate \textsc{DCL} using a real-world dataset obtained from the Allen Institute~\citep{devries2020large}. The dataset contains recordings from awake, head-fixed mice as they viewed visual stimuli including three movies on a continuous loop. The recordings were collected using 2-photon calcium imaging. Our analysis focuses on paired data from 10 repetitions of ``Natural Movie 1'' comprised of 900 frames (Fig~\ref{fig:allen}a). This exact dataset was also used by \citet{schneider2023learnable} and available as \texttt{allen-movie-one-ca-VISp-800-*} in the CEBRA software package.

\subsection{Methods}

We train \textsc{DCL} with an LDS dynamics model. As our baseline, we use the identical training setup but with identity dynamics as in our synthetic experiments. We use an MLP with three layers. The number of units for each layer scales with the embedding dimensionality $d$. The last hidden layer has $10d$ units and all previous layers have $30d$ units. We train on batches with 512 samples (reference and positive) and use $2^{14}=16384$ negative samples. The model is optimized using the Adam optimizer \citep{kingma2014adam} with learning rate $10^{-4}$. We train \textsc{DCL} for 10k steps using the negative mean squared error as the similarity metric. For both models we also make use of the time-lag embeddings introduced in Appendix \ref{app:non_injective} (Figure~\ref{fig:non_injective}), setting $k=5$ and $\tau=20$ (Figure \ref{fig:allen}b). Additionally, we vary the dimensionality $d \in \{8,16,32,64\}$ of the embeddings. We also vary the time offset $i$ that determines the time step of the positive sample $t_\text{pos} = t_\text{ref} + i$ relative to the time step of the reference sample and run experiments for $i\in\{1,5,10,15,20,25,30\}$. We train each model 5 times with different initializations. For a shuffle control, we randomize the time dimension within each individual movie repeat (900 frames) and conduct the same training as previously discussed.
We train on the neural activity of the first 9 repetitions (8100 samples, 270s) and use the 10th (900 samples, 30s) for evaluation. 
To evaluate the models, we compute the consistency metric -- the $R^2$ metric between the embeddings -- used by \citet{schneider2023learnable} between the five runs of the same model and training setup as a proxy for empirical identifiability.
For the consistency calculation, we fit a linear regression model between all pairs of embeddings across training runs, and quantify the resulting $5 \times 4$ comparisons using the $R^2$. We report the mean $R^2$ on the validation trial across all comparisons.

\subsection{Results}

We compare the consistency scores between multiple runs of the same model training for \textsc{DCL} with LDS and our baseline (Figure \ref{fig:allen}c). Overall, \textsc{DCL} with a linear dynamics model has higher consistency scores compared to training without a dynamics model for all settings except for time offset equal to 1, where results are on par. The positive effect of leveraging dynamics learning increases as we increase the time offset.
Across all variations of the time offset, we additionally observe an increase in consistency with an increase in the dimensionality up to 32 dimensions, followed by a drop in consistency for 64-dimensional embeddings.

In addition to the quantitative performance improvements of \textsc{DCL} with LDS, we can also observe more structured embedding spaces for \textsc{DCL} with LDS in Figures \ref{fig:allen}c and \ref{fig:allen}d compared to \textsc{DCL} without a dynamics model. The embeddings of \textsc{DCL} with LDS exhibit a clear manifold and follow relatively smooth trajectories, while the embeddings and trajectories of \textsc{DCL} without a dynamics model are considerably more entangled. Considering the color code showing the relative time of each trial, we can also see that \textsc{DCL} with LDS recovers an embedding space in which each trial follows roughly the same circular motion as the other trials.

When removing temporal structure by shuffling (Fig.~\ref{fig:allen-shuffle}), neither embedding shows non-trivial structure and the consistency metric is low on both the train (panel a) and validation set (panel b).

\subsection{Discussion}

We observe the strongest improvements in consistency for those settings in which the time difference between the reference and positive sample is the largest (time offset 30). Under our assumed dynamics model, predicting further ahead in time results in dynamics which differ increasingly from the identity dynamics. In our synthetic data experiments, we already observed a similar effect. The identity dynamics baseline was able to estimate an embedding space when the noise dominated the system dynamics, but failed to estimate the embedding space as the dynamics became more prominent (Fig~\ref{fig:lorenz}).

Note that our baseline model is similar -- but technically not identical -- to the CEBRA-time models considered by \citet{schneider2023learnable} on the same dataset. Our results demonstrate potential improvements through the introduction of the dynamics model:
First, we demonstrate that fitting embeddings in Euclidean space using a negative mean squared error as the loss function, vs. the cosine similarity used by \citet{schneider2023learnable} benefits from the introduction of a dynamics model. Second, the circular, repetitive structure of movie repeats also emerges in this Euclidean space, most clearly in the presence of a dynamics model (Fig.~\ref{fig:allen}c, i, LDS). Third, we found it crucial to include the positive sample in the denominator which stabilizes training in the absence of normalized embeddings.

\clearpage
\section{Computational Requirements}
\label{app:compute_breakdown}

As stated in the main paper, we required 120 GPU days of compute for the experiments we ran for the paper. This does not include the initial period of prototyping and exploration that preceeded the final sweep of experiments.
To provide more transparency and more detailed breakdown, we list the number of experiments (= model trainings) run for each table and figure in the paper.

\begin{table}[h]
    \centering
    \begin{tabular}{lr}
        \toprule
        Result & Number of Experiments \\
        \midrule
        Table \ref{tab:theory} & 171 \\
        Figure \ref{fig:slds} (SLDS) & 162 \\
        Figure \ref{fig:lorenz} \& \ref{fig:lorenz-ablation} (Lorenz) & 675 \\
        Figure \ref{fig:variations} (Ablation) & 648 \\
        Figure \ref{fig:kde} (KDE) & 1,125 \\
        Figure \ref{fig:supp-vmf} (SLDS with vMF) & 432 \\
        \midrule
        Total & 3,213 \\
        \bottomrule
    \end{tabular}
    \caption{Number of experiments per table/figure.}
\end{table}

For 120 GPU days this comes out at an average of 53 minutes per experiment that made it into the final paper. 
However, the actual runtime of an average \textsc{DCL} training is 15-20 minutes for settings equivalent to the experiments in Table \ref{tab:theory}. 
This difference to what we report as the overall average compute time per experiment can be attributed to several factors: (1) approximately one-third of experiments involved KDE estimation which required 4-6x longer training times, (2) extensive evaluation and metric computation for debugging and reporting purposes added additional overhead, and (3) additional experimental iterations that did not make it into the final paper but contributed to the total compute time of the final sweep.

\clearpage
\section{On component-wise vs. linear identifiability}
\label{app:component-wise}

In the context of non-linear ICA, the mean correlation coefficient (MCC) is frequently reported as a measure of \emph{component-wise identifiability}. In non-linear ICA, it is typically assumed that a set of independent sources $s_1(t), \dots, s_n(t)$ is passed through a mixing function to arrived at the observable signal (cf. \citealp{hyvarinen2017nonlinear}). In contrast, in our work the sources are not independent, but are conditioned on the previous time step and the passed through a dynamics model. This is conceptually similar to the conditional independent assumption in \citet{hyvarinen2019nonlinear} with auxiliary variable $\vf(\vx)$, but with the distinction that at training time, we do not have $\vu$ available, only $\vx$ which requires the use of a dynamics model.

Because of these distinctions in the generation of the latent space, we can generally not expect component-wise identifiability (Theorem~\ref{prop:1}) but will instead obtain linear identifiability. Related work in non-linear ICA likewise can only provide linear identifiability for a comparable Gaussian case~\citep{hyvarinen2019nonlinear,Zimmermann2021ContrastiveLI,schneider2023learnable}.
However, if we assume access to the dynamics model (but not the actual latent space), it is possible to reduce ambiguity in the latent space, and assuring component-wise identifiability.

In Table~\ref{tab:componentwise} we compare MCC and \%$R^2$ across two datasets (SLDS and Lorenz) to extend Table~\ref{tab:theory} of the main paper. As expected, we observe a non-perfect MCC score for all models except for the SLDS model which is provided with the ground-truth dynamics. Due to Eq~\ref{eq:ansatz-v1} this strenghten the guarantee to component-wise identifiabilily, yielding an MCC of close to 100\%.

\begin{table}[h]
\centering
\footnotesize
\begin{tabular}{ll rr}
\toprule
Data $\vf$ & Model $\hvf$
    & \multicolumn{1}{c}{MCC} & \multicolumn{1}{c}{\%$R^2$} \\ 
\midrule
SLDS   & identity & $0.59 \pm 0.06$ & $76.80 \pm 7.40$    \\
SLDS   & SLDS     & $0.70 \pm 0.05$ & $99.52 \pm 0.05$    \\
SLDS   & GT       & $1.00 \pm 0.00$ & $99.20 \pm 0.10$    \\
\midrule
Lorenz & identity & $0.34 \pm 0.07$ & $41.00 \pm 8.57$    \\
Lorenz & LDS      & $0.68 \pm 0.14$ & $81.20 \pm 16.9$   \\
Lorenz & SLDS     & $0.78 \pm 0.13$ & $94.08 \pm 2.75$    \\
\bottomrule
\end{tabular}
\caption{SLDS and Lorenz dataset from Table~\ref{tab:theory} with the addition of the MCC metric.}
\label{tab:componentwise}
\end{table}

\clearpage
\section{Comparison to additional time-series models}
\label{app:baselines}

The baseline -- \textsc{DCL} without a dynamics model -- employed in all experiments in the main paper is technically a CEBRA-time \citep{schneider2023learnable} model which was originally designed for time-series inference.
Note that we use the negative mean squared error as the similarity measure in almost all experiments, and include general improvements for estimation of Euclidean embeddings also in the baseline model (positive sample in the denominator of the InfoNCE loss, additional negative examples). Other modes of CEBRA-time (using a cosine similarity, etc.) were not prominently considered in this work. Other popular contrastive learning methods include time-contrastive learning (TCL; \citealp{hyvarinen2016unsupervised}), permutation contrastive learning (PCL; \citealp{hyvarinen2017nonlinear}). More recently, VAE models designed for time-series analysis and dynamics learning were proposed, such as Temporally Disentangled Representation Learning (TDRL; \citealp{yao2022temporally}).

Naturally, these methods propose different data generating processes, and the empirical performance of \textsc{DCL} as well as these comparison methods will strongly depend on whether these assumptions are met. For instance, TCL requires non-stationary independent sources, PCL requires stationary independent sources, and TDRL uses non-parametric transition models with non-Gaussian noise. Hence, in general, it is not possible to fairly compare all these methods as they have different regimes of operation. We still include a comparison of these methods and \textsc{DCL} for dynamics inference.

\subsection{Verifying Baselines}

\citet{yao2022temporally} published a benchmarking setup for these algorithms\footnote{Code: \url{https://github.com/weirayao/tdrl} (MIT License).}.
We adapted the TDRL codebase minimally and provide a reference implementation for our experiments in our official code release. Our implementation can be diffed against the original TDRL code to highlight the minimal code changes performed for running the benchmarking suite.
We leverage this codebase to run verified baseline algorithms on our SLDS dataset. First, we ensure that we can reproduce the results from \citet{yao2022temporally}. We report the results in Table~\ref{tab:tdrl-repro} for the ``changing'' experimental setting.

We perform 5 runs with different seeds for the exact hyperparameter configurations reported in the TDRL codebase. Remaining differences in the in numbers might be attributed to discrepancies between the paper and the code distribution or the choice of different random seeds, as we observed large variances in some of the models.
We label these models with "-B" to indicate the \emph{base} configuration provided by the public code base.
We report the full results in Table~\ref{tab:tdrl-repro}.

\begin{table}[h]
    \centering
    \footnotesize
    \begin{tabular}{c c c}
        \toprule
             & \multicolumn{2}{c}{MCC} \\ 
         Model        & Reproduced & Reported \\
         \midrule
            PCL-B & $0.535 \pm 0.030$ & $0.599 \pm 0.041$ \\
            TCL-B & $0.367 \pm 0.018$ & $0.399 \pm 0.021$ \\
            TDRL-B & $0.910 \pm 0.067$ & $0.958 \pm 0.017$ \\
         \midrule
    \end{tabular}
    \captionof{table}{Verification on ``changing'' experiment Setting from \citet{yao2022temporally} }\label{tab:tdrl-repro}
\end{table}

\subsection{Experiment Details}

Both TCL and TDRL make use of a categorical context variable indicating changes in the distributions of the true latents. To make the comparison to our framework as fair as possible, we choose the SLDS datasets to allow for a similar context variable in form of the mode/state sequence that modulates the switching between linear dynamics.
Our dataset is comprised of 5 modes, which corresponds to the same number of categories the models from Table~\ref{tab:tdrl-repro} already use. 

\paragraph{Base models.} To be able to apply the baseline models to our SLDS setting, we only change their base configuration in two required ways: We increase the input dimension from 8 to 50 to match the observation produced by the SLDS and reduce the latent dimension from 8 to 6.

\paragraph{Large models.} Because PCL is the closest match to our existing baseline (CEBRA-time) and our encoder architecture is equal to the baseline architecture, we introduce an additional variant ``PCL-L'' (L=Large) to match the number of parameters as close as possible. We do so by increasing the hidden dimension of the PCL encoder model from 50 to 160 and reduce the number of layers from 4 to 3, effectively increasing the number of parameters by factor 5. Because TDRL can be considered the most promising baseline candidate (beside CEBRA-time) based on the results from table \ref{tab:tdrl-repro}, we also double its encoder size from using hidden dimension 128 to 256, resulting in the "TDRL-L" baseline model. 

\paragraph{Dataset.} Leverage two versions of the SLDS dataset used in the main paper. First, we apply it to the exact setting of the SLDS in our Table~\ref{tab:theory} to compare against our default setting with dynamics noise $\sigma_\noise=0.0001$ and rotation angles $\text{max}(\theta_i) = 10$. Additionally, since our main baseline (CEBRA-time) performed best on datasets with lower $\Delta t$ where the noise dominates over the dynamics, we also compare against an SLDS dataset generated with larger dynamics noise $\sigma_\noise=0.001$ and smaller rotation angles $\text{max}(\theta_i) = 5$ (see Figure \ref{fig:slds}b).
We generate 3 different versions of each dataset using different random seeds and on each dataset we train 3 models with different seeds, resulting in 9 models for each baseline and for each of the two settings. We train every baseline model for 50 epochs or until the training time reaches 8 hours.

\paragraph{Metrics.} We compute the Mean Correlation Coefficient (MCC) as well as the $R^2$ metric used for Table~\ref{tab:theory} during training. For the baselines run with the public code base from \citet{yao2022temporally}, we follow their reporting strategy and report the best MCC complemented by the the best $R^2$ achieved at any point during training to make the baseline appear even stronger. For our \textsc{DCL} models and our default baseline, we report the MCC and $R^2$ based on the last model checkpoint as in the main paper.

\subsection{Results}

\newcommand{\mute}[1]{\textcolor{gray}{#1}}

\begin{table}[t]
    \centering
    \footnotesize
    \begin{tabular}{r c c c c}
        \toprule
        & \multicolumn{2}{c}{low noise}
        & \multicolumn{2}{c}{high noise} \\ 
         Model  & MCC (\%) & \textbf{$R^2$} (\%) &
         MCC (\%) & \textbf{$R^2$} (\%)
         \\
         \midrule
            TCL-B & $36.07 \pm 2.27$ & $66.56 \pm 3.87$ & $37.21 \pm 3.66$ & $61.75 \pm 12.56$\\
            PCL-B & $68.28 \pm 2.40$ & $91.33 \pm 0.85$ & $66.86 \pm 3.16$ & $77.99 \pm 3.93$\\
            PCL-L & $68.02 \pm 2.77$ & $90.92 \pm 1.16$ & $69.67 \pm 3.86$ & $80.88 \pm 1.38$\\
            TDRL-B & $64.34 \pm 6.01$ & $83.85 \pm 7.78$ & $62.93 \pm 5.37$ & $80.90 \pm 7.82$\\
            TDRL-L & $63.51 \pm 4.87$ & $84.01 \pm 7.98$ & $62.65 \pm 6.29$ & $81.40 \pm 6.42$\\
         \midrule
            CEBRA-time & $59.46 \pm 5.84$ & $76.80 \pm 7.40$ & $65.71 \pm 4.99$ & $98.66 \pm 0.19$\\
            DCL+SLDS & $69.62 \pm 4.78$ & $99.52 \pm 0.05$ & $68.76 \pm 5.05$ & $98.95 \pm 0.08$\\
         \midrule
            \textcolor{gray}{DCL+GT SLDS} & $99.51 \pm 0.06$ & $99.20 \pm 0.10$ & $98.57 \pm 0.16$ & $97.82 \pm 0.17$\\
         \bottomrule
     \end{tabular}
     \caption{
        Baseline results for TCL, PCL, and TDRL models on switching linear dynamics datasets. The low noise setting is equivalent to Table~\ref{tab:theory}. For the high noise setting (low $\Delta t$), we use larger noise and lower rotation angles, setting $\sigma_\noise=0.001$, $\text{max}(\theta_i)=5$.
    } \label{tab:benchmarking}
    \vspace{-1em}
\end{table}

We outline results for all benchmarked algorithms in Table~\ref{tab:benchmarking}.

In the low noise setting, \textsc{DCL} with the SLDS dynamics model achieves an $R^2$ of 99.5\%. The next best baseline algorithm is the PCL base model, with a maximum $R^2$ of 91.3\%. While both \textsc{DCL} and PCL are learning by contrasting samples across time in the time series, only \textsc{DCL} with the SLDS dynamics model can fully model the ground truth dynamical process. In contrast, the score function in PCL is setup to model variations along \emph{independent} latent dimensions.

Interestingly, PCL outperforms CEBRA-time, which is equivalent to running \textsc{DCL} without a dynamics model (76.8\%). This indicates that the score method in PCL (component-wise linear transformations) outperforms the score method in CEBRA-time on this dataset (negative squared Euclidean distance). It would be interesting to combine the scoring method in PCL with the dynamics model in \textsc{DCL} for further improvements on non-linear dynamics settings.

In the high noise setting, \textsc{DCL} with SLDS dynamics achieves comparable performance as CEBRA-time, as outlined in the main paper (99.0\% vs. 98.7\%). In this setting, PCL performs worse, potentially because the score function in CEBRA-time is better matched to the dominating Gaussian system noise.

In all cases, MCC is not a meaningful metric, with highest scores ranging around 60--70\%. An exception is training \textsc{DCL} with the underlying ground truth dynamics system, which achieves component-wise identifiability and an MCC of 99.51\%.

\end{document}